\def\eqref#1{equation~\ref{#1}}
\def\1{\bm{1}}
\DeclareMathAlphabet{\mathsfit}{\encodingdefault}{\sfdefault}{m}{sl}
\SetMathAlphabet{\mathsfit}{bold}{\encodingdefault}{\sfdefault}{bx}{n}
\newtheorem{proposition}{Proposition}
\newtheorem{remark}{Remark}
\newcommand{\etal}{\textit{et al.}}
\newcommand{\improvecolor}{\color[HTML]{3B9612}}
\let\NAT@parse\undefined
\title{\LARGE \bf Lightweight Temporal Transformer Decomposition for\\ Federated Autonomous Driving}
\author{Tuong Do$^{1,2,3}$, Binh X. Nguyen$^{1}$, Quang D. Tran$^{1,3}$, Erman Tjiputra$^{1}$, Te-Chuan Chiu$^{2}$, Anh Nguyen$^{3}$
\thanks{$^{1}$ AIOZ, Singapore 
        {\tt\small tuong.khanh-long.do@aioz.io}}%
\thanks{$^{2}$ Department of Computer Science, NTHU, Taiwan}%
\thanks{$^{3}$ Department of Computer Science, University of Liverpool, UK}
}
\begin{document}

\maketitle
\thispagestyle{empty}
\pagestyle{empty}

\begin{abstract}
Traditional vision-based autonomous driving systems often face difficulties in navigating complex environments when relying solely on single-image inputs. To overcome this limitation, incorporating temporal data such as past image frames or steering sequences, has proven effective in enhancing robustness and adaptability in challenging scenarios. While previous high-performance methods exist, they often rely on resource-intensive fusion networks, making them impractical for training and unsuitable for federated learning. To address these challenges, we propose lightweight temporal transformer decomposition, a method that processes sequential image frames and temporal steering data by breaking down large attention maps into smaller matrices. This approach reduces model complexity, enabling efficient weight updates for convergence and real-time predictions while leveraging temporal information to enhance autonomous driving performance. Intensive experiments on three datasets demonstrate that our method outperforms recent approaches by a clear margin while achieving real-time performance. Additionally, real robot experiments further confirm the effectiveness of our method. \end{abstract}

\section{Introduction}
Autonomous driving has the potential to revolutionize transportation by significantly improving safety, efficiency, and convenience~\cite{gidado2020survey,nguyen2021autonomous} for human drivers. Central to the effectiveness of autonomous vehicles is their ability to process and interpret visual data to make accurate driving decisions. However, traditional vision-based autonomous driving systems face privacy concerns, as they require collecting data from multiple users to train the model~\cite{nguyen2021federated}. Furthermore, while recent studies have introduced various methods for autonomous driving, many of them predict trajectory information from a single image input~\cite{nguyen2021autonomous}. This limitation reduces the system’s ability to respond quickly and safely while maintaining the privacy of the users' data~\cite{zhao2021end}.

To overcome the limitation when using a single frame as input for the network, several works have included a sequence of frames to predict directly the steering control signal~\cite{abou2019multimodalHPO, hu2022st}. This approach enables the system to anticipate potential hazards and take preventive measures, such as adjusting speed or changing lanes, to avoid close encounters. Despite the potential benefits, integrating temporal information into autonomous driving systems presents several challenges. 
In particular, the recent model complexity may necessitate substantial data for training, impede integration on devices with limited computational power, and pose significant challenges in ensuring real-time responses~\cite{wang2022and}.


\begin{figure}[t]
   \centering
\setlength{\tabcolsep}{0pt}
\begin{tabular}{c}

\shortstack{\includegraphics[width=0.95\linewidth]{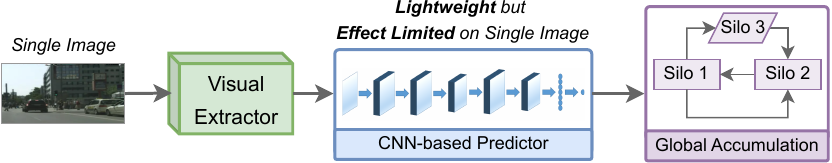}}\\
\shortstack{\small (a)}\\
\shortstack{\includegraphics[width=0.95\linewidth]{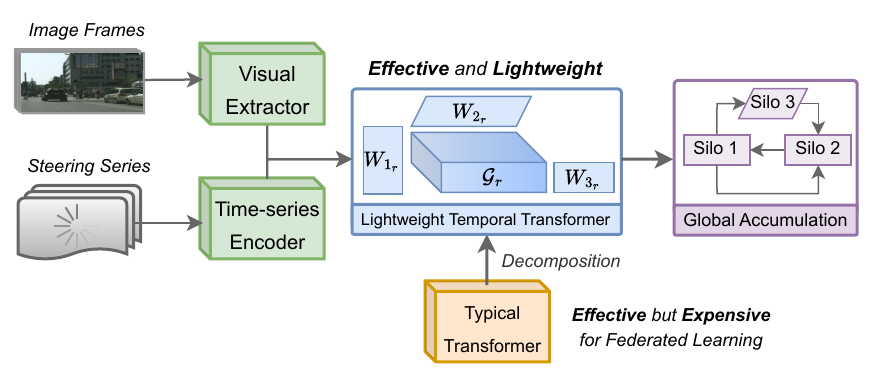}}\\
\shortstack{\small (b)}\\
\end{tabular}
\vspace{-1ex}
    \caption{Comparison between traditional single-frame solutions for federated autonomous driving (a) and our lightweight temporal transformer network to enhance the training feasibility in federated learning setup (b).
    \vspace{-5ex}
    }
    \label{fig:intro}

\end{figure}

To address data privacy, several autonomous driving approaches utilize federated learning to train decentralized models across multiple vehicles~\cite{barbieri2021decentralized,zeng2021multi,meese2022bfrt}. However, most autonomous driving models still rely on the single frame as input and develop a relatively simple network to enable feasible a training in federated learning setup~\cite{li2021privacy,nguyen2022deep,liang2022federated}. This single-frame approach overlooks the temporal data that each vehicle collects over time, which can provide essential context for understanding motion patterns, tracking objects, and anticipating potential hazards. As a result, these models do not fully leverage the sequence of information needed to better predict and respond to dynamic driving scenarios,
ultimately limiting their performance and adaptability.

In this paper, our goal is to develop a federated autonomous driving framework that takes into account the temporal information as the input. To address the complexity and learning challenges of the fusion model when training in a federated scenario using temporal information, we propose a Lightweight Temporal Transformer, a new approach designed to reduce the complexity of the network in each silo by efficiently approximating the information from the inputs. Our method utilizes a decomposition method under unitary attention to break down learnable attention maps into low-rank ones, ensuring that the resulting models remain lightweight and trainable. By reducing model complexity, our approach enables the network to use temporal data while ensuring convergence. Intensive experiments demonstrate that our approach clearly improves performance over state-of-the-art methods in federated autonomous driving.

\section{Related works}
\vspace{-1ex}
\textbf{Autonomous Driving.} Autonomous driving is a rapidly advancing field that has garnered significant research interest in recent years. Various studies have explored the application of deep learning for critical tasks, including object detection and tracking~\cite{hu2022investigating,chen2022pseudo}, trajectory prediction~\cite{wang2022ltp}, and autonomous braking and steering control~\cite{ijaz2021automatic}. For example, Xin \etal~\cite{xin2020slip} proposed a recursive backstepping steering controller that connects yaw-rate-based path-following commands to steering adjustments, while Xiong \etal~\cite{xiong2021reduced} analyzed nonlinear dynamics using proportional control methods. Yi \etal~\cite{yi2022anti} introduced an algorithm that determines the instantaneous center of rotation within a self-reconfigurable robot's area, enabling waypoint navigation while avoiding collisions. Additionally, Yin \etal~\cite{yin2022trajectory} combined model predictive control with covariance steering theory to create a robust controller for nonlinear autonomous driving systems. 
Moreover, recent works have leveraged temporal information to address complex environments and dynamic scene changes, demonstrating improved robustness and adaptability in challenging scenarios~\cite{shao2023reasonnet}.
However, despite these advances~\cite{liu2024attention}, managing model complexity to enable deployment on low-level devices while maintaining effective performance remains a significant challenge.

\textbf{ Federated Learning.}
Federated learning (FL) supports decentralized training of machine learning models across multiple devices while keeping data localized, thereby preserving privacy and reducing data transfer~\cite{kaur2024federated}. In autonomous driving, FL enables vehicles to collaboratively learn from diverse datasets without sharing raw data~\cite{nguyen2022deep}. Previous research has explored the use of FL in autonomous driving~\cite{do2024reducingCDL,zeng2021multi,meese2022bfrt}. Recently, Zhao \etal~\cite{zhao2018federated} developed a federated learning framework for vehicle-to-vehicle communication that enhances model robustness and generalization. Some recent works also consider temporal information to improve performance in complex environments~\cite{zhang2022gof, zhou2022stfl,shen2024spatial,liu2024online}. Other works have explored clustering-based solutions for post-processing~\cite{belal2022pepper, kaur2024federated}, learning feasibility through the modifying of the accumulator~\cite{yuan2022fedtse}, topology design~\cite{chen2020practicalPriRec}, or global architecture~\cite{meese2022bfrt, zeng2021multi}. However, fully exploiting temporal information within the constraints of federated learning remains a significant challenge due to computational complexity and limited device resources~\cite{chellapandi2023federated}.  


\textbf{Lightweight Models.}
The tensor decomposition techniques aim at breaking down complex interactions into simpler components~\cite{kolda2009tensor}, thus, reducing the computational burden and improving the interpretability of models~\cite{sidiropoulos2017tensor}. This technique shows promise in various applications, including image recognition~\cite{zhang2020robust,yin2021towards,dai2023deep} and natural language processing~\cite{tjandra2020recurrent,wang2021kronecker}, but its potential in federated learning for autonomous driving remains largely untapped. Compared with distillation~\cite{hinton2002training,sautier2022image,li2022driver}, pruning~\cite{im2023visual,yang2023deep,samal2020attention}, or quantization~\cite{gheorghe2021model,sciangula2022hardware} that require complex training setups, decomposing the network tensor can be trained directly with less parameters without the need to modify training paradigm, which shows potentials in federated training for autonomous vehicles when handling high dimensional data inputs.

\section{Methodology}

\subsection{Preliminary}
We consider a federated network with $N$ autonomous vehicles, collaboratively training a global driving policy $\theta$ by aggregating local weights $\theta_i$ from each silo $i$, where $i \in [1, N]$. Each silo minimizes a regression loss $\mathcal{L}$ computed using a deep network that predicts the steering angle from temporally ordered RGB images and steering series.

\begin{figure*}[ht]
    \centering
    \includegraphics[width = \columnwidth, keepaspectratio=True]{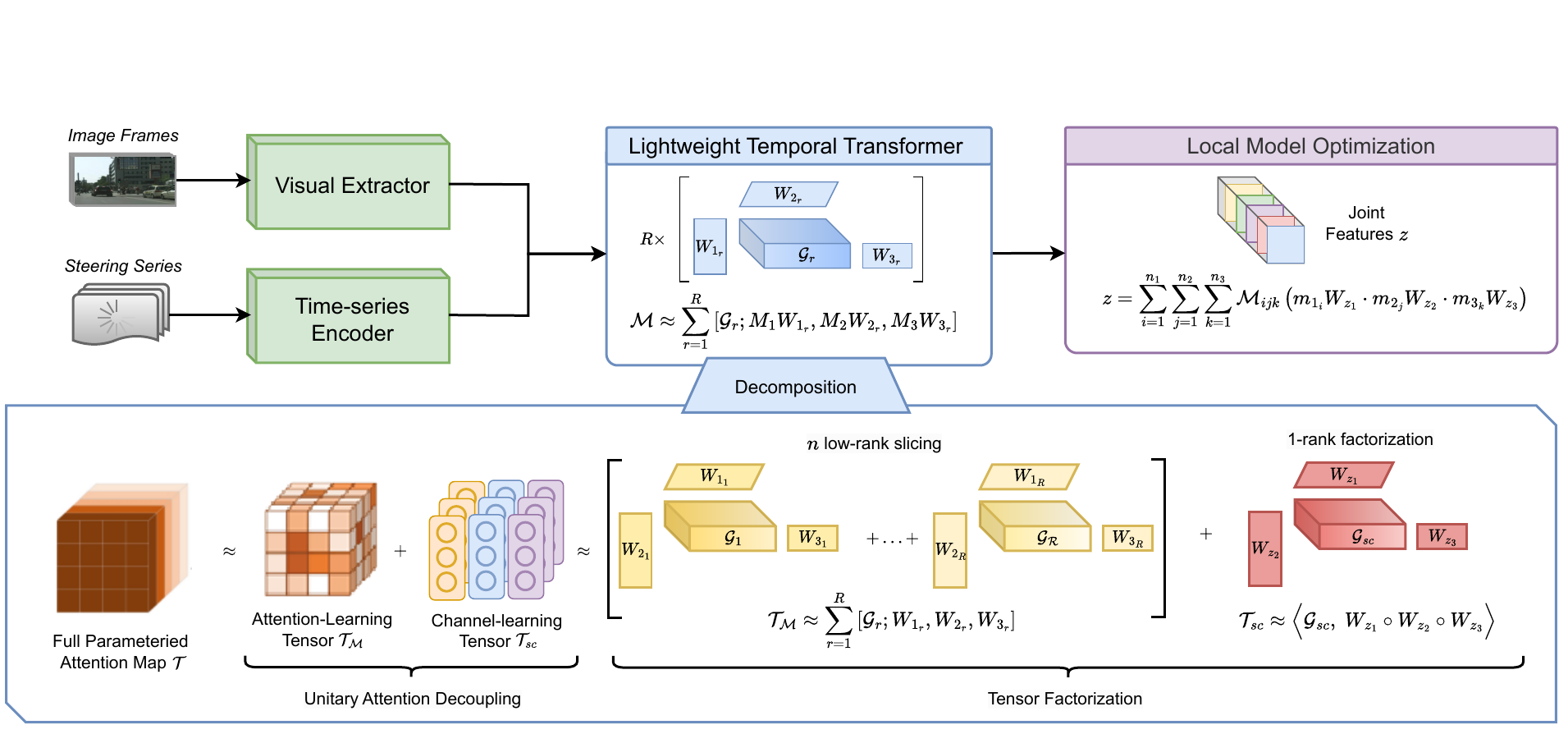}
    \caption{An overview of our lightweight temporal transformer decomposition method for federated autonomous driving.
    \vspace{-3ex}
    }
    \label{fig:tensor_based_PARALIND}
\end{figure*}

\textbf{Local Regression Objective.}
We use mean squared error (MSE)  as the objective function for predicting the steering angle in each local silo. Here, we only use the extracted joint features $z$ of the local model to predict steering angles.
\begin{equation}
\mathcal{L} = \text{MSE}(\theta_i, {\xi}^b_i  )
\label{eq:MSE}
\end{equation}
where $b$ is the mini-batch size; ${\xi}^b_i$  is the ground-truth steering angle of batch $b$ from silo $i$.

\textbf{Local Optimization.}
To ensure the model convergence under a federated training scenario, for each $k$ communication round, we use decentralized periodic averaging stochastic gradient descent (DPASGD)~\cite{wang2018cooperative}. 
\begin{equation}
\theta_{i}\left(k + 1\right) =
\begin{cases}
    \sum_{j \in \mathcal{N}_{i}^{+} \cup{\{i\}}}\textbf{A}_{i,j}\theta_{j}\left(k\right), \\\qquad \text{if k} \equiv 0 \left(\text{mod }u + 1\right) \text{\&} \left|\mathcal{N}_{i}^{+}\right| > 1 ,\\
    \theta_{i}\left(k\right)-\alpha_{k}\frac{1}{b}\sum^b_{h=1}\nabla \mathcal{L}_i\left(\theta_{i}\left(k\right),\xi^b_i\left(k\right)\right), \\\qquad\text{otherwise.}
\end{cases}
\label{eq:skipbatch_DFL}
\end{equation}
where $\mathcal{N}_i^{+}$ is the in-neighbors set of silo $i$; $u$ is the number of local updates; $\textbf{A}$ is a consensus matrix for parameter accumulating. 

\textbf{Global Accumulation.}
Since our method focuses on the practical application of a network under a federated learning scenario, we use the simple accumulation solution FedAvg~\cite{li2019convergence} for computing the global model $\theta$. The federated averaging process is conducted as follows:
\begin{equation}
\theta = \frac{1}{\sum^N_{i=0}{\lambda_i}} \sum^N_{i=0}\lambda_{{i}} \theta_{{i}},
\label{eq:aggr_model}
\end{equation}
where 
$\lambda_i = \{0,1\}$. Note that $\lambda_i = 1$ indicates that silo $i$ joins the inference process and $\lambda_i = 0$ if not.

\textbf{Feature Extraction.} We use a standard vision transformer to extract the feature from the sequence of temporal inputs. This representation, $z \in \mathds{R}^{d_z}$, is computed as:
\begin{equation}
z = \Big\langle \mathcal{T},\; \operatorname{vec}(M_1) \circ \operatorname{vec}(M_2) \circ \operatorname{vec}(M_3) \Big\rangle,
\label{eq:hypothesis}
\end{equation}
where $\circ$ is the outer product; $\langle.,.\rangle$ is the inner product;
$\mathcal{T} \in \mathds{R}^{d_{M_1} \times d_{M_2} \times d_{M_3} \times d_z}$ is a learnable tensor; $M_l \in \mathds{R}^{n_l \times d_l}$ is the modality input with $n_l$ elements, each represented by $d_l$-dimension features; $d_{M_l} = n_l \times d_l$; and $vec(M_l)$ vectorizes $M_l$ into a row vector. $M_1, M_2, M_3$ represent past frames, steering series, and the current RGB image, respectively.
While $\mathcal{T}$ captures input interactions, learning such a large tensor is impractical with high-dimensional inputs $d_{M_l}$, straining vehicle computing resources and hindering model convergence due to large linear parameter correlations. Thus, we aim to reduce the size of $\mathcal{T}$ by minimizing unnecessary linear combinations through the introduced Lightweight Temporal Transformer Decomposition. Specifically, we use \textit{unitary attention decoupling} to approximate large tensor $\mathcal{T}$ into smaller ones, followed by a \textit{tensor factorization} to factorize tensors into factor matrices.

\subsection{Unitary Attention Decoupling}
Inspired by \cite{Yang2016StackedAN}, we rely on the idea of \textit{unitary attention} mechanism to reduce the size of  $\mathcal{T}$. Specifically, let $z_p \in \mathds{R}^{d_z}$ be the joint representation of $p^{th}$ triplet of channels where each channel in the triplet is from a different input. The representation of each channel in a triplet is $m_{1_i}, m_{2_j}, m_{3_k}$, where $i \in  [1,n_1], j \in  [1,n_2], k \in  [1,n_3]$, respectively. There are $n_1 \times n_2 \times n_3$ possible triplets over the three inputs. The joint representation $z_p$ resulted from a fully parameterized trilinear interaction over three channel representations $m_{1_i}, m_{2_j}, m_{3_k}$ of $p^{th}$ triplet is computed as
\begin{equation}
z_p = \Big\langle \mathcal{T}_{sc},\, m_{1_i} \circ m_{2_j} \circ m_{3_k} \Big\rangle,
\label{eq:triplet_compute}
\end{equation}
where $\mathcal{T}_{sc} \in \mathds{R}^{d_{1} \times d_{2} \times d_{3} \times d_z}$  is the learning tensor between channels in the triplet.

Following~\cite{Yang2016StackedAN}, the joint representation
$z$ is approximated by using joint representations of all triplets described in (\ref{eq:triplet_compute}) instead of using fully parameterized interaction over three inputs. Hence, we compute
\begin{equation}
z = \sum_p \mathcal{M}_p z_p,
\label{eq:Unitary}
\end{equation}
Note that in (\ref{eq:Unitary}), we compute a weighted sum over all possible triplets. The $p^{th}$ triplet is associated with a scalar weight $\mathcal{M}_p$. The set of $\mathcal{M}_p$ is called as the attention map $\mathcal{M}$, where $\mathcal{M} \in \mathds{R}^{n_1 \times n_2 \times n_3}$.

The attention map $\mathcal{M}$ 
resulted from a reduced parameterized trilinear interaction over three inputs $M_1, M_2$ and $M_3$ is computed as follows
\begin{equation}
\mathcal{M} = \Big\langle \mathcal{T}_\mathcal{M},\; M_1 \circ M_2 \circ M_3 \Big\rangle,
\label{eq:tri_attmap}
\end{equation}
where $\mathcal{T}_\mathcal{M} \in \mathds{R}^{d_1 \times d_2 \times d_3}$ is the learning tensor of attention map $\mathcal{M}$. 
Note that the learning tensor $\mathcal{T}_\mathcal{M}$ in (\ref{eq:tri_attmap}) has a reduced size compared to the learning tensor $\mathcal{T}$.

By integrating (\ref{eq:triplet_compute}) into  (\ref{eq:Unitary}), the joint representation $z$ in (\ref{eq:Unitary}) can be rewritten as
\begin{equation}
z= \sum_{i=1}^{n_1}\sum_{j=1}^{n_2}\sum_{k=1}^{n_3} \mathcal{M}_{ijk} \langle \mathcal{T}_{sc},\; m_{1_i} \circ m_{2_j} \circ m_{3_k} \rangle ,
\label{eq:transformed_hypo}
\end{equation}
where $\mathcal{M}_{ijk}$ in (\ref{eq:transformed_hypo}) is actually a scalar attention weight $\mathcal{M}_p$ of the attention map $\mathcal{M}$ in (\ref{eq:tri_attmap}).

It is also worth noting from (\ref{eq:transformed_hypo}) that to compute $z$, instead of learning the large tensor $\mathcal{T} \in \mathds{R}^{d_{M_1} \times d_{M_2} \times d_{M_3} \times d_z}$, we now only need to learn two smaller tensors  $\mathcal{T}_{sc} \in \mathds{R}^{d_{1} \times d_{2} \times d_{3} \times d_z}$  in (\ref{eq:triplet_compute}) and $\mathcal{T}_\mathcal{M} \in \mathds{R}^{d_1 \times d_2 \times d_3}$ in (\ref{eq:tri_attmap}).

\subsection{Tensor Factorization}
\label{sub:factorization}
Although the large tensor $\mathcal{T}$ is replaced by two smaller tensors $\mathcal{T}_\mathcal{M}$ and $\mathcal{T}_{sc}$, there are too many linear fusions between mentioned tensors which still affect the learning of the global model. Therefore, we apply the factorization as in~\cite{bro2009modelingPARALIND} to $\mathcal{T}_\mathcal{M}$ and $\mathcal{T}_{sc}$ into learnable factor matrices.  

The factorization for the learning tensor $\mathcal{T}_\mathcal{M} \in \mathds{R}^{d_1 \times d_2 \times d_3}$ can be calculated as
\begin{equation}
\mathcal{T}_\mathcal{M} \approx \sum_{r=1}^{\mathcal{R}} \Big\langle \mathcal{G}_r,\; W_{1_r} \circ W_{2_r} \circ W_{3_r} \Big\rangle,
\label{eq:tensorbased_PARALIND}
\end{equation}
where $\mathcal{G}_r \in \mathds{R}^{d_{1_r} \times d_{2_r} \times d_{3_r}}$ are compact learnable Tucker tensors~\cite{Hitchcock1927TheEOTucker}, small-sized tensors that support minimizing error when approximating a larger tensor using its factorized matrices; $\mathcal{R}$ is a slicing parameter, establishing a trade-off between the decomposition rate (which is directly related to the usage memory and the computational cost) and the performance. The maximum value for $\mathcal{R}$ is usually set to the greatest common divisor of $d_1, d_2$ and $d_3$. In our experiments, we found that $\mathcal{R}=32$ gives a good trade-off between the decomposition rate and the performance.

Here, we have dimension $d_{1_r} = d_1/\mathcal{R}$, $d_{2_r} = d_2/\mathcal{R}$  and $d_{3_r} = d_3/\mathcal{R}$; 
 $W_{1_r} \in \mathds{R}^{d_{1} \times d_{1_r}}$, $W_{2_r} \in \mathds{R}^{d_{2} \times d_{2_r}}$ and  $W_{3_r} \in \mathds{R}^{d_{3} \times d_{3_r}}$ are learnable factor matrices.  
Fig.~\ref{fig:tensor_based_PARALIND} shows the illustration of factorization for a tensor $\mathcal{T}_\mathcal{M}$.

The shortened form of $\mathcal{T}_\mathcal{M}$ in (\ref{eq:tensorbased_PARALIND}) can be rewritten as
\begin{equation}
\mathcal{T}_\mathcal{M}\approx \sum^\mathcal{R}_{r=1} {\llbracket  \mathcal{G}_r; W_{1_r},W_{2_r}, W_{3_r} \rrbracket},
\label{eq:shorten_PARALIND}
\end{equation}

Integrating the learning tensor $\mathcal{T}_\mathcal{M}$ from (\ref{eq:shorten_PARALIND}) into (\ref{eq:tri_attmap}), the attention map $\mathcal{M}$ can be rewritten as
\begin{equation}
\mathcal{M} = \sum^\mathcal{R}_{r=1} {\llbracket  \mathcal{G}_r; M_1W_{1_r},  M_2W_{2_r}, M_3W_{3_r}\rrbracket},
\label{eq:tri_attmap_decomp}
\end{equation}

Similar to $\mathcal{T}_\mathcal{M}$, we apply to $\mathcal{T}_{sc}$ in (\ref{eq:transformed_hypo}) to reduce the complexity. 
Note that the size of  $\mathcal{T}_{sc}$
 directly affects the dimension of the joint representation $z \in \mathds{R}^{d_z}$.
Hence, to minimize the loss of information, we set the slicing parameter $\mathcal{R} = 1$ and the projection dimension of factor matrices at $d_z$, i.e., the same dimension of the joint representation $z$.

Therefore,  $\mathcal{T}_{sc} \in \mathds{R}^{d_1 \times d_2 \times d_3 \times d_z}$ in (\ref{eq:transformed_hypo}) is  calculated as
\begin{equation}
\mathcal{T}_{sc} \approx  \Big\langle \mathcal{G}_{sc},\; W_{z_1} \circ W_{z_2} \circ W_{z_3} \Big\rangle,
\label{eq:T_sc_PARALIND}
\end{equation}
 where $W_{z_1} \in \mathds{R}^{d_{1} \times d_z}$, $W_{z_2} \in \mathds{R}^{d_{2} \times d_z}$, $W_{z_3} \in \mathds{R}^{d_{3} \times d_z}$ are learnable factor matrices and $\mathcal{G}_{sc} \in \mathds{R}^{d_z \times d_z \times d_z \times d_z}$ is a smaller  tensor (compared to $\mathcal{T}_{sc}$).

 Up to now, we already have $\mathcal{M}$ by (\ref{eq:tri_attmap_decomp}) and $\mathcal{T}_{sc}$ by (\ref{eq:T_sc_PARALIND}). Hence, we can compute $z$ using (\ref{eq:transformed_hypo}) as 
 \begin{equation}
\begin{aligned}
 &z= \sum_{i=1}^{n_1}\sum_{j=1}^{n_2}\sum_{k=1}^{n_3}\\
 & \mathcal{M}_{ijk}\langle \mathcal{G}_{sc},\; (m_{1_i}W_{z_1}) \circ (m_{2_j}W_{z_2}) \circ (m_{3_k}W_{z_3}) \rangle,
\end{aligned}
\label{eq:z1}
\end{equation}
Here, it is interesting to note that $\mathcal{G}_{sc} \in \mathds{R}^{d_z \times d_z \times d_z \times d_z}$ in (\ref{eq:z1}) has rank $1$. Thus, the result obtained from (\ref{eq:z1})  can be approximated by the Hadamard products without the presence of rank-1 tensor $\mathcal{G}_{sc}$ \cite{kolda2009tensor}.
In particular, $z$ in (\ref{eq:z1}) can be computed without using $\mathcal{G}_{sc}$ as
 \begin{equation}
z = \sum_{i=1}^{n_1}\sum_{j=1}^{n_2}\sum_{k=1}^{n_3} \mathcal{M}_{ijk} \left( m_{1_{i}}W_{z_1} \cdot m_{2_{j}}W_{z_2} \cdot m_{3_{k}}W_{z_3}\right),
\label{eq:final_hypo}
\end{equation}

The joint embedding dimension $d_z$ is a user-defined parameter that makes a trade-off between the capability of the representation and the computational cost. In our experiments, $d_z = 1,024$ gives a good trade-off.

\subsection{Convergence Analysis}
\begin{proposition} Lightweight Temporal Transformer Decomposition in Equation~\ref{eq:hypothesis} can be considered a form of Bilinear Attention~\cite{Kim2018BilinearAN}, naturally inheriting its convergence ability.
\label{prop1}
\end{proposition}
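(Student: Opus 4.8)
The plan is to exhibit an explicit structural reduction of the decomposed trilinear model to the two-way (bilinear) attention mechanism of Kim~\etal~\cite{Kim2018BilinearAN}, and then to invoke their convergence analysis verbatim. First I would recall the low-rank bilinear pooling form underlying bilinear attention: given two modalities with learnable projections $U, V$ and a bilinear attention map $\mathcal{A}$, the joint vector is a weighted sum of Hadamard products, $\sum_{i,j}\mathcal{A}_{ij}\,(x_i U \cdot y_j V)$. The goal is then to show that our final representation $z$ in (\ref{eq:final_hypo}) is an instance of this same family.

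The core observation is that the Hadamard product is associative and commutative, so the three projected factors appearing in (\ref{eq:final_hypo}) can be regrouped. Concretely, I would merge two of the three streams --- e.g. the two visual inputs $M_1$ (past frames) and $M_3$ (current image) --- into a single combined token set by defining, for each pair $(i,k)$, the merged token $\tilde{m}_{(ik)} = m_{1_{i}}W_{z_1} \cdot m_{3_{k}}W_{z_3} \in \mathds{R}^{d_z}$. Reindexing the triple sum in (\ref{eq:final_hypo}) over the combined index $p=(i,k)$ and the remaining index $j$ yields $z = \sum_{p,j} \mathcal{A}_{pj}\,(\tilde{m}_{p} \cdot m_{2_{j}}W_{z_2})$ with $\mathcal{A}_{pj} := \mathcal{M}_{ijk}$, which is exactly the bilinear-attention form over the expanded token collection $\{\tilde{m}_p\}$ and the steering tokens $\{m_{2_{j}}W_{z_2}\}$.

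Next I would verify that the attention map itself respects the bilinear structure. The decomposition of $\mathcal{T}_\mathcal{M}$ in (\ref{eq:tri_attmap_decomp}) expresses $\mathcal{M}$ through the projected factors $M_1W_{1_r}, M_2W_{2_r}, M_3W_{3_r}$; after the same grouping of the two visual streams, this collapses into a rank-$\mathcal{R}$ two-way interaction between the merged tokens and the steering tokens, matching the low-rank attention-map construction of \cite{Kim2018BilinearAN}. Since each silo minimizes the MSE loss (\ref{eq:MSE}) --- a smooth regression objective of the same form assumed in the bilinear setting --- the architecture, the attention map, and the objective all coincide with those covered by Kim~\etal's analysis, so their convergence guarantee transfers directly.

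The hard part will be justifying that the regrouping preserves \emph{every} hypothesis of the cited convergence result, not merely its algebraic form. Two points need care: (i) the normalization applied to $\mathcal{M}$ must agree with the normalization under which bilinear attention's convergence is established, since reindexing $(i,k)\mapsto p$ alters the set over which the attention weights are normalized; and (ii) the rank-$1$ collapse of $\mathcal{G}_{sc}$ used to pass from (\ref{eq:z1}) to (\ref{eq:final_hypo}) must be shown to leave the low-rank bilinear pooling structure intact, so that the merged tokens $\tilde{m}_p$ genuinely act as a single modality's projected features. Once these two consistency checks are discharged, the reduction is exact and the inherited convergence claim follows.
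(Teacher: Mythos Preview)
Your route is valid but differs from the paper's. Rather than keeping three modalities and \emph{merging} two of them into an expanded token set, the paper \emph{specializes} (\ref{eq:hypothesis}) to two modalities from the start: it posits inputs $M_1^b, M_2^b$, reruns the factorization of Section~\ref{sub:factorization} on the resulting bilinear tensor $\mathcal{T}_b$, and then extracts the $k$-th coordinate of the double sum to recover the literal bilinear-attention identity $z_k = ({M_1^b}^{T} W_{z_1})_k^{T}\,\mathcal{M}\,({M_2^b}^{T} W_{z_2})_k$. So the paper argues that bilinear attention is the two-input \emph{instance} of the decomposition and leaves the transfer to the three-input model implicit. Your reduction is more direct about the deployed trilinear network, but the attention-map side needs more care than you indicate: after your regrouping, $\mathcal{M}_{ijk}$ from (\ref{eq:tri_attmap_decomp}) is bilinear in $m_{1_i}$ and $m_{3_k}$ separately, hence quadratic---not linear---in any concatenated merged token, so the claimed ``collapse into a rank-$\mathcal{R}$ two-way interaction'' does not land in the low-rank bilinear attention-map family of~\cite{Kim2018BilinearAN} without an additional argument beyond your hard parts (i) and (ii). The paper's specialization sidesteps this entirely, at the cost of establishing the connection only for a two-modality cousin rather than the actual three-modality model.
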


\begin{proof}
Let the input contain two representations of two modalities, i.e., $M^b_1 \in \mathds{R}^{n^b_1 \times d^b_1}$ and $M^b_2$ $ \in \mathds{R}^{n^b_2 \times d^b_2}$, where $n^b_1$ and $n^b_2$ are number of channels; $d^b_1$ and $d^b_2$ are the representation dimension of each corresponding channel. Following Equation~\ref{eq:hypothesis}, the joint representation $z \in \mathds{R}^{d_z}$ can now be described as
\begin{equation}
z= \Big\langle \mathcal{T}_b,\; \operatorname{vec}(M^b_1) \circ \operatorname{vec}(M^b_2) \Big\rangle,
\label{eq:ban_hypothesis}
\end{equation}
where $\mathcal{T}_{b}  \in \mathds{R}^{d^b_{n1} \times d^b_{n2} \times d_z}$ is learnable tensor;
$d^b_{n1} = n^b_1 \times d^b_1$; $d^b_{n2} =  n^b_2\times d^b_2$. By applying parameter factorization (Sec.~\ref{sub:factorization}), $z$ in (\ref{eq:ban_hypothesis}) can be approximated based on (\ref{eq:final_hypo}) as
\begin{equation}
z= \sum_{i=1}^{n^b_1}\sum_{j=1}^{n^b_2} \mathcal{M}_{ij}\left( {M^b_{1_i}}^{T}W_{z_{b1}} \cdot {M^b_{2_j}}^{T}W_{z_{b2}}\right),
\label{eq:decompose_BAN}
\end{equation}
where $W_{z_{b1}} \in \mathds{R}^{d^b_1 \times d_z}$ and $W_{z_{b2}} \in \mathds{R}^{d^b_2 \times d_z}$ are learnable factor matrices; $\mathcal{M}_{ij}$ is an attention weight of attention map $\mathcal{M} \in \mathds{R}^{n^b_1 \times n^b_2}$ which can be computed from  (\ref{eq:tri_attmap_decomp}) as
\begin{equation}
\mathcal{M} = \sum^\mathcal{R}_{r=1} {\llbracket  \mathcal{G}_r; {M^b_{1}}^{T}W_{1_r},  {M^b_{2}}^{T}W_{2_r}\rrbracket},
\label{eq:tri_att_BAN}
\end{equation}
where $W_{1_r} \in \mathds{R}^{d^b_1 \times d^b_{1_r}}$ and $W_{2_r} \in \mathds{R}^{d^b_2 \times d^b_{2_r}}$ are learnable factor matrices; $d^b_{1_r} = d^b_1 /\mathcal{R}$; $d^b_{2_r} = d^b_2 /\mathcal{R}$; each $\mathcal{G}_r \in \mathds{R}^{d^b_{1_r} \times d^b_{2_r}}$ is a learnable Tucker tensor. By extracting $k$-element and reorganize the multiplication computations over tensors, (\ref{eq:decompose_BAN}) can be rewritten as
\begin{equation}
z_k= \sum_{i=1}^{n^b_1}\sum_{j=1}^{n^b_2} \mathcal{M}_{ij}\left( {M^b_{1_{i}}}^{T}\left(W_{{z_1}_k} W_{{z_2}_k}^{T}\right)M^b_{2_{j}}\right),
\label{eq:final_hypo_VQA_BAN}
\end{equation}
where $z_k$ is $k^{th}$ element of the joint representation $z$; $W_{{z_1}_k}$ and $W_{{z_2}_k}$ are $k^{th}$ column in factor matrices $W_{z_1}$ and $W_{z_2}$.

Interestingly, from (\ref{eq:final_hypo_VQA_BAN}), we can rewrite it as a computational form of a Bilinear Attention as below:
\begin{equation}
\begin{aligned}
z_k& = \sum_{i=1}^{n^b_1}\sum_{j=1}^{n^b_2} \mathcal{M}_{ij}\left( {M^b_{1_{i}}}^{T}\left(W_{{z_1}_k} W_{{z_2}_k}^{T}\right)M^b_{2_{j}}\right)\\ &=  \sum_{i=1}^{n^b_1}\sum_{j=1}^{n^b_2} \mathcal{M}_{ij}\left( {M^b_{1_{i}}}^{T}W_{{z_1}_k} \right)\left(W_{{z_2}_k}^{T}M^b_{2_{j}}\right) \\
&= {({M^b_{1}}^{T} W_{z_1})}^T_k\mathcal{M} {({M^b_{2}}^{T}W_{z_2})_k}.
\label{eq:convertFunction}
\end{aligned}
\end{equation}
\vspace{0ex}
\end{proof}

\vspace{-4ex}
\begin{remark}

\begin{itemize}
	\item[] Proposition~\ref{prop1} suggests that the results of our decomposition method can be considered as a Bilinear Attention, which inherits its convergence ability and ensures the network will converge during the training.
\end{itemize}
\end{remark}

\begin{table*}[ht]

\caption{Performance comparison between different methods. The Gaia topology is used.
\vspace{-2ex}
}
\begin{center}
\small
\setlength{\tabcolsep}{0.36 em} 
{\renewcommand{\arraystretch}{1.2}
\begin{tabular}{l|c|c|c|ccc|ccc|c|c}
\hline
\multirow{2}{*}{\textbf{Method}} & \multirow{2}{*}{\textbf{Main Focus}} & \multirow{2}{*}{\textbf{Inputs}} & \multirow{2}{*}{\textbf{\begin{tabular}[c]{@{}c@{}}Learning\\ Scenario\end{tabular}}} & \multicolumn{3}{c|}{\textbf{RMSE}} & \multicolumn{3}{c|}{\textbf{MAE}} & \multirow{2}{*}{\textbf{\begin{tabular}[c]{@{}c@{}}\#Params\\ (M)\end{tabular}}} & \multicolumn{1}{c}{\multirow{2}{*}{\textbf{\begin{tabular}[c]{@{}c@{}}Avg. Cycle\\ Time (ms)\end{tabular}}}} \\ \cline{5-10}
 &  &  &  & \multicolumn{1}{c|}{\textit{\textbf{Udacity+}}} & \multicolumn{1}{c|}{\textit{\textbf{Gazebo}}} & \textit{\textbf{Carla}} & \multicolumn{1}{c|}{\textit{\textbf{Udacity+}}} & \multicolumn{1}{c|}{\textit{\textbf{Gazebo}}} & \textit{\textbf{Carla}} &  & \multicolumn{1}{c}{} \\ \hline
\text{MobileNet~\cite{sandler2018mobilenetv2}} & \multirow{5}{*}{Archtecture} & \multirow{2}{*}{\begin{tabular}[c]{@{}c@{}}Realtime\\ Vision\end{tabular}} & \multirow{5}{*}{CLL} & \multicolumn{1}{c|}{0.193} & \multicolumn{1}{c|}{0.083} & 0.286 & \multicolumn{1}{c|}{0.176} & \multicolumn{1}{c|}{0.057} & 0.200 & 2.22 & \_  \\ \cline{1-1} \cline{5-12} 
\text{DroNet~\cite{loquercio2018dronet}} &  &  &  & \multicolumn{1}{c|}{0.183} & \multicolumn{1}{c|}{0.082} & 0.333 & \multicolumn{1}{c|}{0.15} & \multicolumn{1}{c|}{0.053} & 0.218 & 0.31 & \_ \\ \cline{1-1} \cline{3-3} \cline{5-12} 
\text{St-p3~\cite{hu2022st}} &  & \multirow{3}{*}{\begin{tabular}[c]{@{}c@{}}Temporal\end{tabular}} &  & \multicolumn{1}{c|}{0.092} & \multicolumn{1}{c|}{0.071} & 0.132 & \multicolumn{1}{c|}{0.090} & \multicolumn{1}{c|}{0.049} & 0.132 & 1247.87 & \_  \\ \cline{1-1} \cline{5-12} 
\text{ADD~\cite{zhao2021end}} &  &  &  & \multicolumn{1}{c|}{0.097} & \multicolumn{1}{c|}{0.049} & 0.166 & \multicolumn{1}{c|}{0.092} & \multicolumn{1}{c|}{0.042} & 0.121 & 3234.22 & \_ \\ \cline{1-1} \cline{5-12} 
\text{HPO~\cite{abou2019multimodalHPO}} &  &  &  & \multicolumn{1}{c|}{0.088} & \multicolumn{1}{c|}{0.044} & 0.157 & \multicolumn{1}{c|}{0.070} & \multicolumn{1}{c|}{0.044} & 0.105 & 5990.19 & \_  \\ \hline\hline
\text{FedAvg~\cite{mcmahan2017communication}} & \multirow{4}{*}{\begin{tabular}[c]{@{}c@{}}Aggregation/\\ Optimization\end{tabular}} & \multirow{3}{*}{\begin{tabular}[c]{@{}c@{}}Realtime\\ Vision\end{tabular}} & \multirow{8}{*}{SFL} & \multicolumn{1}{c|}{0.212} & \multicolumn{1}{c|}{0.094} & 0.269 & \multicolumn{1}{c|}{0.185} & \multicolumn{1}{c|}{0.064} & 0.222 & 0.31 & \multicolumn{1}{c}{152.4} \\ \cline{1-1} \cline{5-12} 
\text{FedProx~\cite{li2018federated}} &  &  &  & \multicolumn{1}{c|}{0.152} & \multicolumn{1}{c|}{0.077} & 0.226 & \multicolumn{1}{c|}{0.118} & \multicolumn{1}{c|}{0.063} & 0.151 & 0.31 & \multicolumn{1}{c}{111.5} \\ \cline{1-1} \cline{5-12} 
\text{STAR~\cite{sattler2019robust}} &  &  &  & \multicolumn{1}{c|}{0.179} & \multicolumn{1}{c|}{0.062} & 0.208 & \multicolumn{1}{c|}{0.149} & \multicolumn{1}{c|}{0.053} & 0.155 & 0.31 & \multicolumn{1}{c}{299.9} \\ \cline{1-1} \cline{3-3} \cline{5-12} 
\text{FedTSE~\cite{yuan2022fedtse}} &  & \multirow{5}{*}{\begin{tabular}[c]{@{}c@{}}Temporal\end{tabular}} &  & \multicolumn{1}{c|}{0.144} & \multicolumn{1}{c|}{0.063} & 0.079 & \multicolumn{1}{c|}{0.075} & \multicolumn{1}{c|}{0.051} & 0.154 & 89.1 & 1172  \\ \cline{1-2} \cline{5-12} 
\text{TGCN~\cite{yu2018spatio}} & \multirow{2}{*}{Clustering} &  &  & \multicolumn{1}{c|}{0.137} & \multicolumn{1}{c|}{0.069} & 0.193 & \multicolumn{1}{c|}{0.069} & \multicolumn{1}{c|}{0.047} & 0.179 & 78.33 & 224  \\ \cline{1-1} \cline{5-12} 
\text{Fed-STGRU~\cite{kaur2024federated}} &  &  &  & \multicolumn{1}{c|}{0.129} & \multicolumn{1}{c|}{0.059} & 0.151 & \multicolumn{1}{c|}{0.080} & \multicolumn{1}{c|}{0.048} & 0.156 & 91.01 & 370  \\ \cline{1-2} \cline{5-12} 
\text{BFRT~\cite{meese2022bfrt}} & \multirow{2}{*}{Archtecture} &  &  & \multicolumn{1}{c|}{0.113} & \multicolumn{1}{c|}{0.054} & 0.111 & \multicolumn{1}{c|}{0.081} & \multicolumn{1}{c|}{0.043} & 0.133 & 427.26 & 1256 \\ \cline{1-1} \cline{5-12} 
\text{MFL~\cite{zeng2021multi}} &  &  &  & \multicolumn{1}{c|}{0.108} & \multicolumn{1}{c|}{0.052} & 0.133 & \multicolumn{1}{c|}{0.093} & \multicolumn{1}{c|}{0.043} & 0.138 & 173.87 & 781   \\ \hline \hline
\text{CDL~\cite{do2024reducingCDL}} & Optimization & \multirow{4}{*}{\begin{tabular}[c]{@{}c@{}}Realtime\\ Vision\end{tabular}} & \multirow{6}{*}{DFL} & \multicolumn{1}{c|}{{0.141}} & \multicolumn{1}{c|}{{0.062}} & {0.183} & \multicolumn{1}{c|}{{0.083}} & \multicolumn{1}{c|}{{0.052}} & {0.147} & 0.63 & \multicolumn{1}{c}{72.7} \\ \cline{1-2} \cline{5-12} 
\text{MATCHA~\cite{wang2019matcha}} & \multirow{4}{*}{\begin{tabular}[c]{@{}c@{}}Topology\\ Design\end{tabular}} &  &  & \multicolumn{1}{c|}{0.182} & \multicolumn{1}{c|}{0.069} & 0.208 & \multicolumn{1}{c|}{0.148} & \multicolumn{1}{c|}{0.058} & 0.215 & 0.31 & \multicolumn{1}{c}{171.3} \\ \cline{1-1} \cline{5-12} 
\text{MBST~\cite{prim1957shortest,marfoq2020throughput}} &  &  &  & \multicolumn{1}{c|}{0.183} & \multicolumn{1}{c|}{0.072} & 0.214 & \multicolumn{1}{c|}{0.149} & \multicolumn{1}{c|}{0.058} & 0.206 & 0.31 & \multicolumn{1}{c}{82.1} \\ \cline{1-1} \cline{5-12} 
\text{FADNet~\cite{nguyen2022deep}} &  &  &  & \multicolumn{1}{c|}{0.162} & \multicolumn{1}{c|}{0.069} & 0.203 & \multicolumn{1}{c|}{0.134} & \multicolumn{1}{c|}{0.055} & 0.197 & 0.32 & \multicolumn{1}{c}{62.6} \\ \cline{1-1} \cline{3-3} \cline{5-12} 
\text{PriRec~\cite{chen2020practicalPriRec}} &  & \multirow{2}{*}{\begin{tabular}[c]{@{}c@{}} Temporal\end{tabular}} &  & \multicolumn{1}{c|}{0.137} & \multicolumn{1}{c|}{0.066} & 0.196 & \multicolumn{1}{c|}{0.093} & \multicolumn{1}{c|}{0.052} & 0.127 & 325.57 & 272 \\ \cline{1-2} \cline{5-12} 
\text{PEPPER~\cite{belal2022pepper}} & Clustering &  &  & \multicolumn{1}{c|}{0.124} & \multicolumn{1}{c|}{0.055} & 0.115 & \multicolumn{1}{c|}{0.078} & \multicolumn{1}{c|}{0.054} & 0.122 & 89.13 & 438  \\ \hline\hline
\multirow{3}{*}{\text{\textbf{Ours}}} & \multirow{3}{*}{\begin{tabular}[c]{@{}c@{}}Compact\\ Network\end{tabular}} & \multirow{3}{*}{\begin{tabular}[c]{@{}c@{}} Temporal\end{tabular}} & CLL & \multicolumn{1}{c|}{\textbf{0.088}} & \multicolumn{1}{c|}{0.045} & 0.091 & \multicolumn{1}{c|}{0.078} & \multicolumn{1}{c|}{0.039} & 0.114 & 5.01 & \_ \\ \cline{4-12} 
 &  &  & SFL  & \multicolumn{1}{c|}{0.107} & \multicolumn{1}{c|}{0.049} & \textbf{0.072} & \multicolumn{1}{c|}{\textbf{0.069}} & \multicolumn{1}{c|}{\textbf{0.035}} &0.119 & 5.01 & 180  \\ \cline{4-12} 
 &  &  & DFL & \multicolumn{1}{c|}{0.091} & \multicolumn{1}{c|}{\textbf{0.043}} & 0.076 & \multicolumn{1}{c|}{0.076} & \multicolumn{1}{c|}{0.038} & \textbf{0.104} & 5.01 & 121  \\ \hline
\end{tabular}
}
\vspace{-2ex}
\end{center}

\label{tab:sota}
\end{table*}

\section{Experiment}
\subsection{Implementation Details}

\textbf{Dataset.} Udacity+~\cite{udacity2016data}, Gazebo Indoor~\cite{nguyen2022deep}, and Carla Outdoor dataset~\cite{nguyen2022deep} are used as benchmarking datasets, which are similar to setups mentioned in~\cite{nguyen2021federated,nguyen2022deep}. 
To provide temporal information, we further preprocess the training data by chunking videos into multiple consequences. Each consequence includes a current input image, 5 previous frames, and their corresponding 5 past steering angles.


\textbf{Training.} Each local model is trained with a dynamic batch size and an adaptive learning rate, utilizing the RMSprop~\cite{hinton2012rmsprop} optimizer. The training process is executed in decentralized silos, where local updates are periodically transmitted and aggregated following Equation~\ref{eq:aggr_model}. The Early stopping criterion is applied to ensure convergence and 
the simulation setup follows the framework outlined in~\cite{nguyen2022deep}. As in~\cite{marfoq2020throughput}, our experiments explore three federated network topologies: Gaia~\cite{knight2011internetzoo}, the NWS~\cite{awscloud}, and Exodus framework~\cite{knight2011internetzoo}. While we adopt the NWS topology in primary evaluations to reflect real-world cloud-based federated learning scenarios, Gaia and Exodus are analyzed in an ablation study to assess the impact of varying network structures on performance and convergence behavior.

\textbf{Baselines.}
We evaluate our approach across various learning settings, including real-time vision-based and temporal-based methods. For Centralized Local Learning (CLL), we benchmark against MobileNet-V2~\cite{sandler2018mobilenetv2}, Dronet~\cite{loquercio2018dronet}, St-p3~\cite{hu2022st}, ADD~\cite{zhao2021end}, and HPO~\cite{abou2019multimodalHPO}. In the Server-based Federated Learning (SFL) setting, comparisons are made with FedAvg~\cite{mcmahan2017communication}, FedProx~\cite{li2018federated}, STAR~\cite{sattler2019robust}, FedTSE~\cite{yuan2022fedtse}, TGCN~\cite{yu2018spatio}, Fed-STGRU~\cite{kaur2024federated}, BFRT~\cite{meese2022bfrt}, and MFL~\cite{zeng2021multi}. For Decentralized Federated Learning (DFL), we assess performance against MATCHA~\cite{wang2019matcha}, MBST~\cite{marfoq2020throughput}, FADNet~\cite{nguyen2022deep}, PriRec~\cite{chen2020practicalPriRec}, and PEPPER~\cite{belal2022pepper}. We assess model performance using Root Mean Square Error (RMSE) and Mean Absolute Error (MAE). Additionally, we measure computational efficiency by recording the wall-clock time (ms) for training each method on an NVIDIA A100 GPU.

\subsection{Main Results}
Table~\ref{tab:sota} shows a comparison between our approach and state-of-the-art methods, both with and without temporal information. The results demonstrate a clear performance advantage, as our method achieves notably lower RMSE and MAE across all three datasets: Udacity+, Carla, and Gazebo. Besides, we also provide visualization of method comparisons in Fig.~\ref{fig:VisCompare} and in our supplementary video, which emphasizes our approach's effectiveness in optimizing model complexity while maintaining model convergence and real-time performance, making it suitable for deployment in federated autonomous driving scenarios.

\subsection{Ablation Study}

\begin{table}[ht]
\caption{Performance of methods under multi-modality inputs.
}
\begin{center}
\resizebox{\linewidth}{!}{
\setlength{\tabcolsep}{0.15 em} 
{\renewcommand{\arraystretch}{1.2}
\begin{tabular}{c|c|c|c|c|c|c|c|c|c}
\hline
\multirow{3}{*}{\textbf{Method}} & \multirow{3}{*}{\textbf{Type}} & \multicolumn{3}{c|}{\textbf{Inputs}} & \multicolumn{3}{c|}{\textbf{RMSE}} & \multirow{3}{*}{\textbf{\begin{tabular}[c]{@{}c@{}}\#Params\\ (M)\end{tabular}}} & \multicolumn{1}{c}{\multirow{3}{*}{\textbf{\begin{tabular}[c]{@{}c@{}}Avg. \\Inference\\ Time (ms)\end{tabular}}}} \\ \cline{3-8}
 &  & \multicolumn{1}{c|}{\textit{\textbf{\begin{tabular}[c]{@{}c@{}}Current\\ Image\end{tabular}}}} & \multicolumn{1}{c|}{\textit{\textbf{\begin{tabular}[c]{@{}c@{}}Previous\\ Frame\end{tabular}}}} & \textit{\textbf{\begin{tabular}[c]{@{}c@{}}Steering\\ Series\end{tabular}}} & \multicolumn{1}{c|}{\textit{\textbf{Udacity+}}} & \multicolumn{1}{c|}{\textit{\textbf{Gazebo}}} & \textit{\textbf{Carla}} &  & \multicolumn{1}{c}{} \\ \hline

 &  & \checkmark & \checkmark &  & 0.78 & 0.23 & 0.26  & 207.74 & 426\\ 
 & & \checkmark &  & \checkmark & 0.223 & 0.137 & 0.149 & 121.03  & 128  \\ 
\multirow{-3}{*}{\begin{tabular}[c]{@{}c@{}}HPO\\\cite{abou2019multimodalHPO}\end{tabular}} &\multirow{-3}{*}{\begin{tabular}[c]{@{}c@{}}Full-\\Parametrized\\Network\end{tabular}} & \checkmark & \checkmark & \checkmark & {1.127} & {\_} & {0.972} &5,990.19  &\_ \\ \hline  \hline
 & & \checkmark & \checkmark &   & 0.162  & 0.091  & 0.109  & 1.42 & 19 \\ 
 & & \checkmark &  & \checkmark & 0.144 & 0.092 & 0.092 & 0.97 & 17 \\ 
\multirow{-3}{*}{\textbf{Ours}} &\multirow{-3}{*}{\begin{tabular}[c]{@{}c@{}}Compact\\Network\end{tabular}} & \cellcolor[HTML]{EFEFEF}\checkmark & \cellcolor[HTML]{EFEFEF}\checkmark & \cellcolor[HTML]{EFEFEF}\checkmark & \cellcolor[HTML]{EFEFEF}\textbf{0.091} & \cellcolor[HTML]{EFEFEF}\textbf{0.043}  & \cellcolor[HTML]{EFEFEF}\textbf{0.076} & \cellcolor[HTML]{EFEFEF} 5.01  &  \cellcolor[HTML]{EFEFEF}22 \\ \hline
\end{tabular}
}
}
\end{center}
\vspace{-2ex}

\label{tab:spatial_temporal}
\end{table}

\begin{figure}[!ht] 
   \centering
\resizebox{\linewidth}{!}{
\setlength{\tabcolsep}{2pt}
\begin{tabular}{cccc}
\rotatebox[origin=l]{90}{\hspace{0.1cm} \textbf{\small{CDL~\cite{do2024reducingCDL}}}} &
\shortstack{\includegraphics[width=0.33\linewidth]{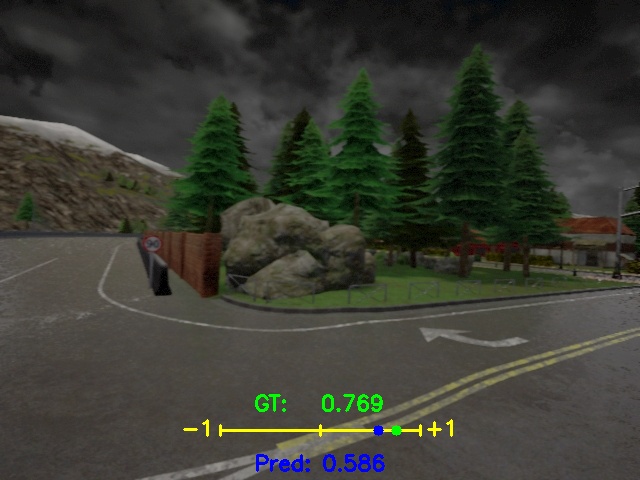}}&
\shortstack{\includegraphics[width=0.33\linewidth]{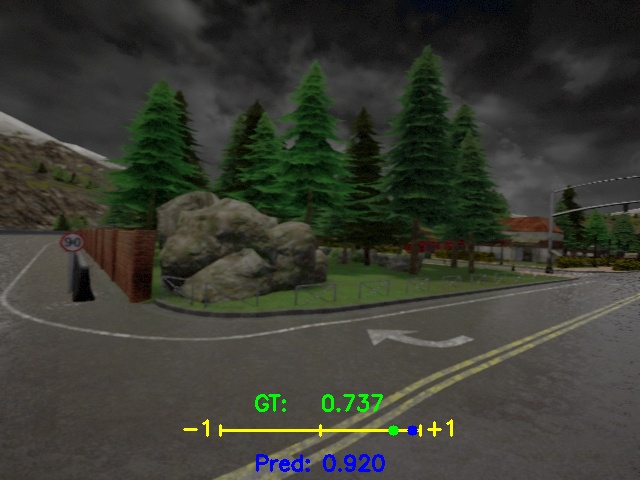}}&
\shortstack{\includegraphics[width=0.33\linewidth]{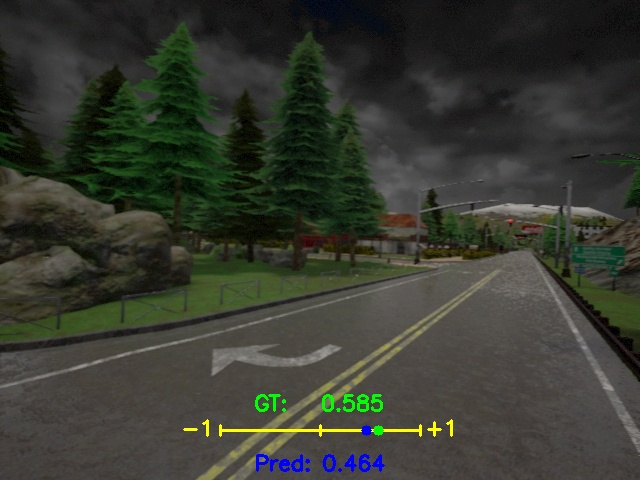}}\\[1pt]
\rotatebox[origin=l]{90}{\hspace{0.35cm} \textbf{\small{HPO~\cite{abou2019multimodalHPO}}}} &
\shortstack{\includegraphics[width=0.33\linewidth]{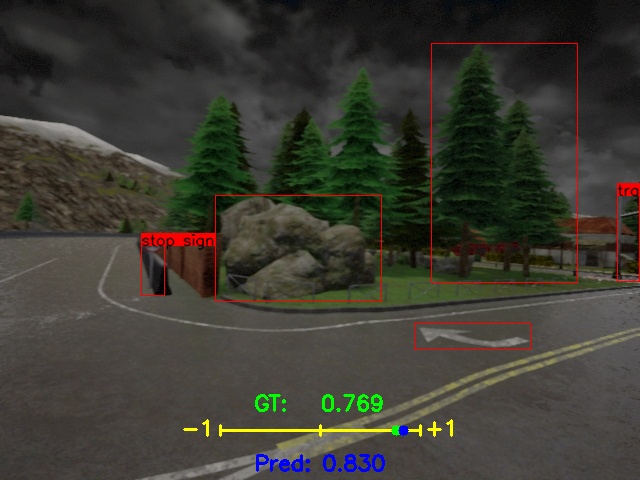}}&
\shortstack{\includegraphics[width=0.33\linewidth]{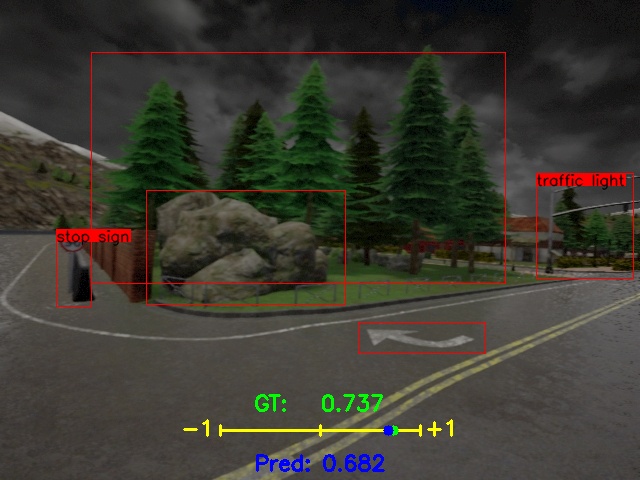}}&
\shortstack{\includegraphics[width=0.33\linewidth]{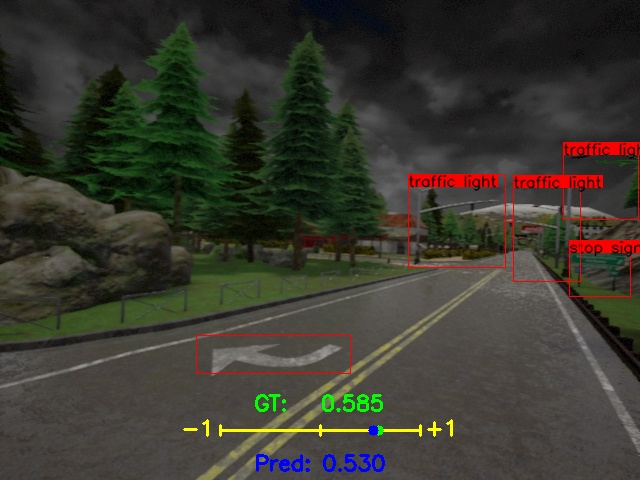}}\\[1pt]
\rotatebox[origin=l]{90}{\hspace{0.5cm} \textbf{\small{Ours}}} &
\shortstack{\includegraphics[width=0.33\linewidth]{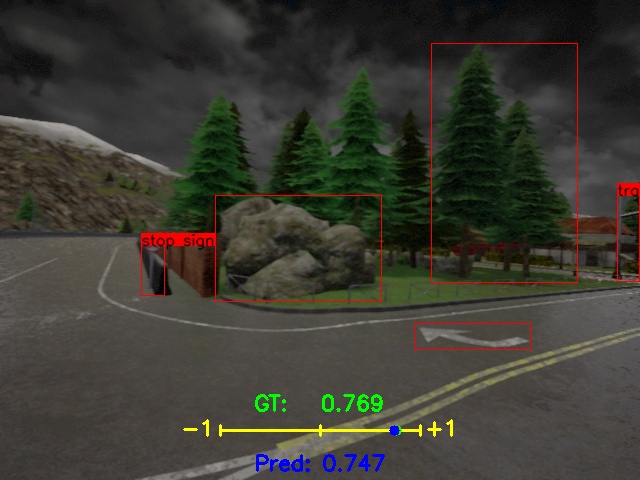}}&
\shortstack{\includegraphics[width=0.33\linewidth]{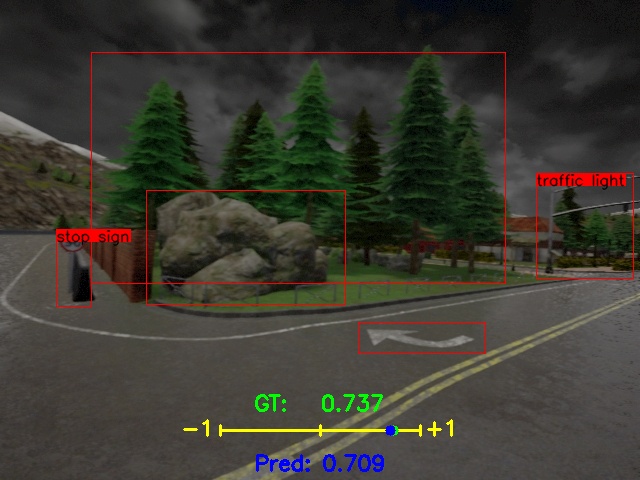}}&
\shortstack{\includegraphics[width=0.33\linewidth]{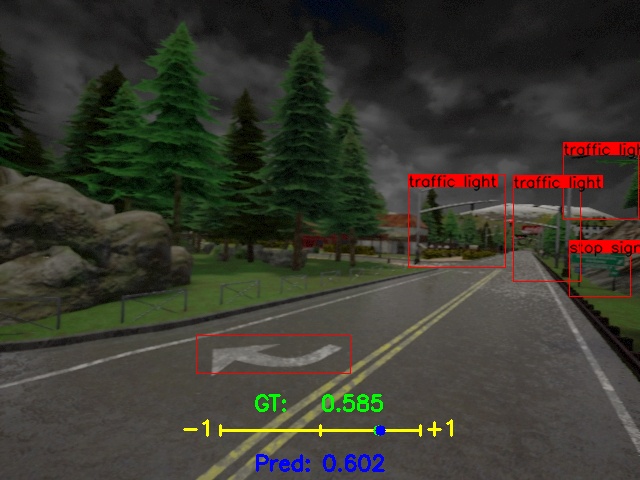}}\\[1pt]

\end{tabular}

}
\vspace{-1ex}
    \caption{Qualitative results between different methods.
    \label{fig:VisCompare}
    \vspace{-3ex}
    } 
\end{figure}

\textbf{Temporal Analysis.} 
Table~\ref{tab:spatial_temporal} shows the performance and the computing trade-off between our compact network and a full-parametrized network HPO~\cite{abou2019multimodalHPO} in handling different modalities for decentralized autonomous driving. The results show that using more temporal information provides higher change to increase model performance. However, they also burst complexity and cause divergence. These results also imply our method's effectiveness in handling complexity while fully leveraging temporal information to maximize performance. Besides, the compact network also ensures convergence and real-time computation.

\textbf{Robustness Analysis.} Training federated algorithms becomes increasingly challenging as the number of vehicle data silos grows. To evaluate the robustness of our approach, we test it alongside baseline methods across different topology sizes. Table~\ref{tab:cross_silo} compares the performance of FADNet~\cite{nguyen2022deep}, CDL~\cite{do2024reducingCDL}, and our method across three network topology infrastructures: Gaia~\cite{knight2011internetzoo} (11 silos), NWS~\cite{awscloud} (22 silos), and Exodus~\cite{knight2011internetzoo} (79 silos). The results indicate that our approach consistently outperforms the baselines in all setups, demonstrating its scalability and effectiveness in large-scale vehicle networks. Moreover, the consistent performance of our approach across different environments, as shown in Fig.~\ref{fig:Vis}, further confirms its robustness and adaptability.




\begin{table}[ht]
\caption{Results of our method under different topologies.}
\begin{center}
\resizebox{\linewidth}{!}{
\setlength{\tabcolsep}{0.25 em} 
{\renewcommand{\arraystretch}{1.1}
\begin{tabular}{c|c|ccc}
\hline
\multirow{2}{*}{\textbf{Topology}} & \multirow{2}{*}{\textbf{Architecture}} & \multicolumn{3}{c}{\textbf{Dataset}} \\ \cline{3-5} 
 &  & \multicolumn{1}{c|}{\textit{Udacity+}} & \multicolumn{1}{c|}{\textit{Gazebo}} & \textit{Carla} \\ \hline
\multirow{3}{*}{\begin{tabular}[c]{@{}c@{}}\textbf{Gaia}\\ \textit{(11 silos)}\end{tabular}} & FADNet & \multicolumn{1}{c|}{0.162\improvecolor($\downarrow$0.071)} & \multicolumn{1}{c|}{0.069\improvecolor($\downarrow$0.026)} & 0.203 \improvecolor($\downarrow$0.127)\\   
 & CDL & \multicolumn{1}{c|}{0.141\improvecolor($\downarrow$0.050)} & \multicolumn{1}{c|}{0.062\improvecolor($\downarrow$0.019)} & 0.183\improvecolor($\downarrow$0.107)\\   
 & \cellcolor[HTML]{EFEFEF}\textbf{{Ours}} & \multicolumn{1}{c|}{\cellcolor[HTML]{EFEFEF}\textbf{0.091}} & \multicolumn{1}{c|}{\cellcolor[HTML]{EFEFEF}\textbf{0.043}} & \cellcolor[HTML]{EFEFEF}\textbf{0.076} \\ \hline
\multirow{3}{*}{\begin{tabular}[c]{@{}c@{}}\textbf{NWS}\\ \textit{(22 silos)}\end{tabular}} & FADNet & \multicolumn{1}{c|}{0.165\improvecolor($\downarrow$0.084)} & \multicolumn{1}{c|}{0.07\improvecolor($\downarrow$0.017)} & 0.2\improvecolor($\downarrow$0.082)\\   
 & CDL & \multicolumn{1}{c|}{0.138\improvecolor($\downarrow$0.057)} & \multicolumn{1}{c|}{0.058\improvecolor($\downarrow$0.005)} & 0.182\improvecolor($\downarrow$0.064)\\   
 & \cellcolor[HTML]{EFEFEF}\textbf{{Ours}} & \multicolumn{1}{c|}{\cellcolor[HTML]{EFEFEF}\textbf{0.081}} & \multicolumn{1}{c|}{\cellcolor[HTML]{EFEFEF}\textbf{0.053}} & 	\multicolumn{1}{c}{\cellcolor[HTML]{EFEFEF}\textbf{0.118}} \\ \hline
\multirow{3}{*}{\begin{tabular}[c]{@{}c@{}}\textbf{Exodus}\\ \textit{(79 silos)}\end{tabular}} & FADNet & \multicolumn{1}{c|}{0.179\improvecolor($\downarrow$0.087)} & \multicolumn{1}{c|}{0.081\improvecolor($\downarrow$0.026)} & 0.238\improvecolor($\downarrow$0.117)\\   
 & CDL & \multicolumn{1}{c|}{0.138\improvecolor($\downarrow$0.046)} & \multicolumn{1}{c|}{0.061\improvecolor($\downarrow$0.006)} & 0.176\improvecolor($\downarrow$0.055)\\   
 & \cellcolor[HTML]{EFEFEF}\textbf{{Ours}} & \multicolumn{1}{c|}{\cellcolor[HTML]{EFEFEF}\textbf{0.092}} & \multicolumn{1}{c|}{\cellcolor[HTML]{EFEFEF}\textbf{0.055}} & \multicolumn{1}{c}{\cellcolor[HTML]{EFEFEF}\textbf{0.121}} \\ \hline
\end{tabular}
}
}
\end{center}
\label{tab:cross_silo}
\vspace{-2ex}
\end{table}

\begin{figure}[h] 
   \centering
\resizebox{1\linewidth}{!}{
\setlength{\tabcolsep}{2pt}
\begin{tabular}{cccccc}
\rotatebox[origin=l]{90}{\hspace{0.3cm} \textbf{Udacity+}} &
\shortstack{\includegraphics[width=0.33\linewidth]{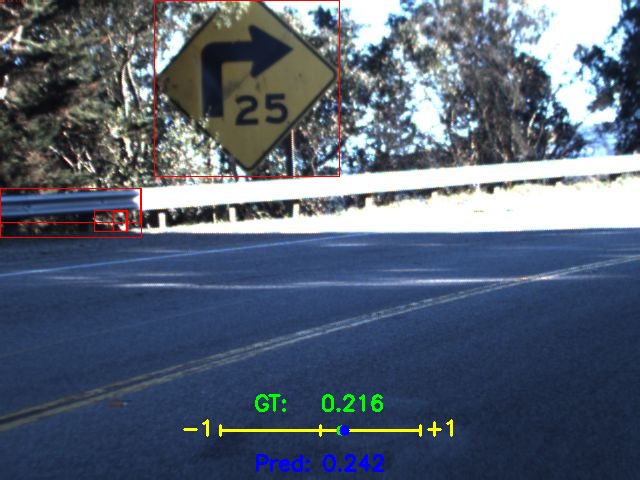}}&
\shortstack{\includegraphics[width=0.33\linewidth]{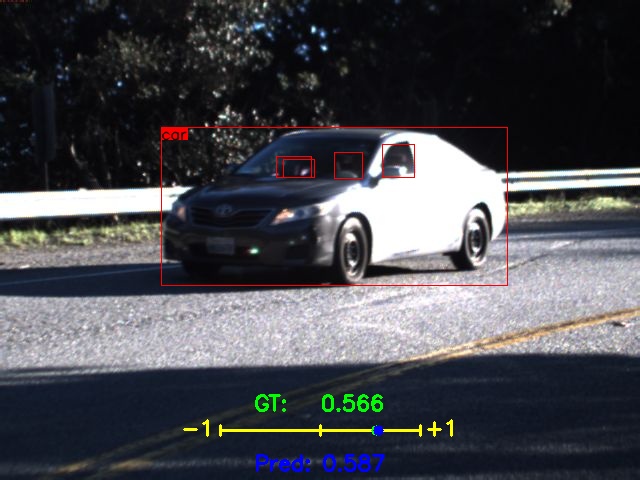}}&
\shortstack{\includegraphics[width=0.33\linewidth]{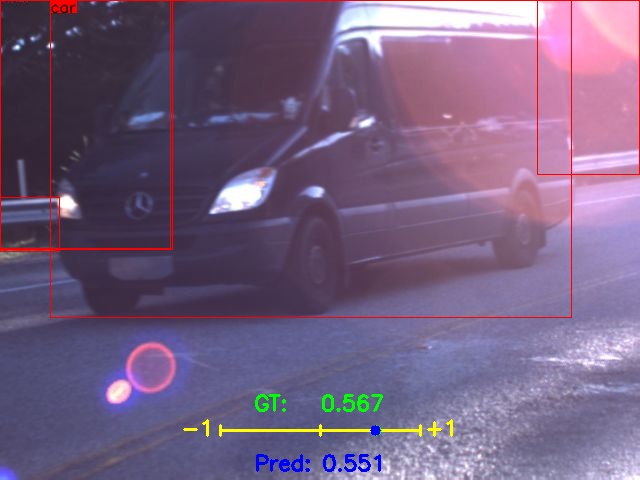}}\\[1pt]
\rotatebox[origin=l]{90}{\hspace{0.2cm} \textbf{Gazebo}} &
\shortstack{\includegraphics[width=0.33\linewidth]{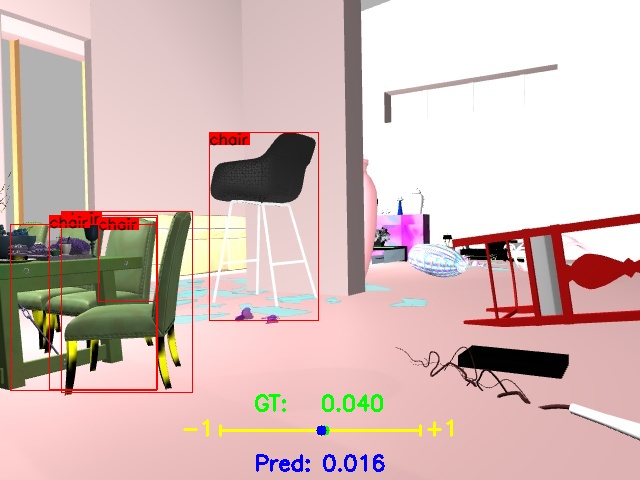}}&
\shortstack{\includegraphics[width=0.33\linewidth]{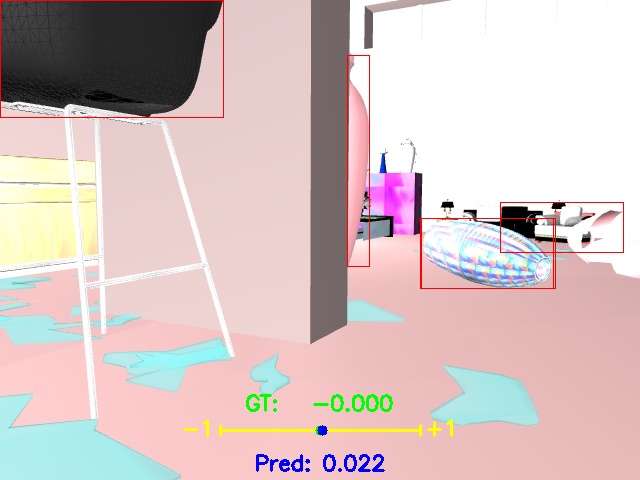}}&
\shortstack{\includegraphics[width=0.33\linewidth]{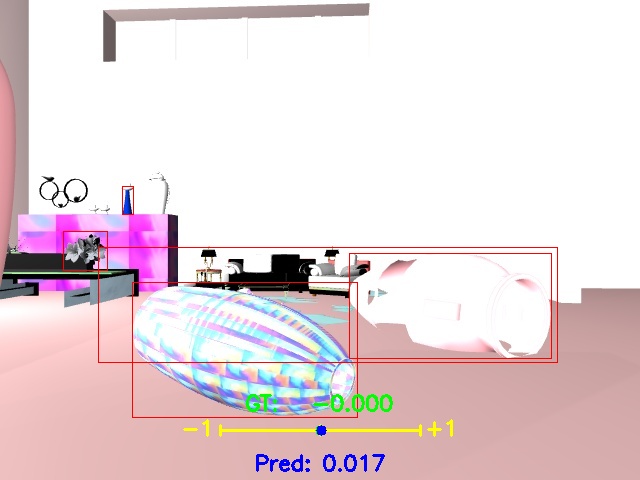}}\\[1pt]
\rotatebox[origin=l]{90}{\hspace{0.2cm} \textbf{Carla}} &
\shortstack{\includegraphics[width=0.33\linewidth]{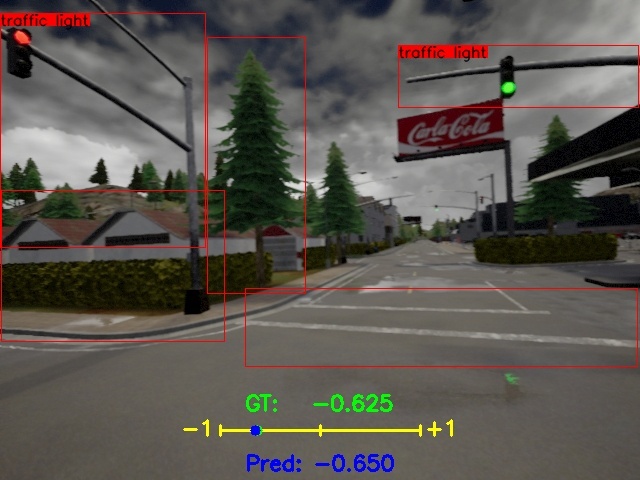}}&
\shortstack{\includegraphics[width=0.33\linewidth]{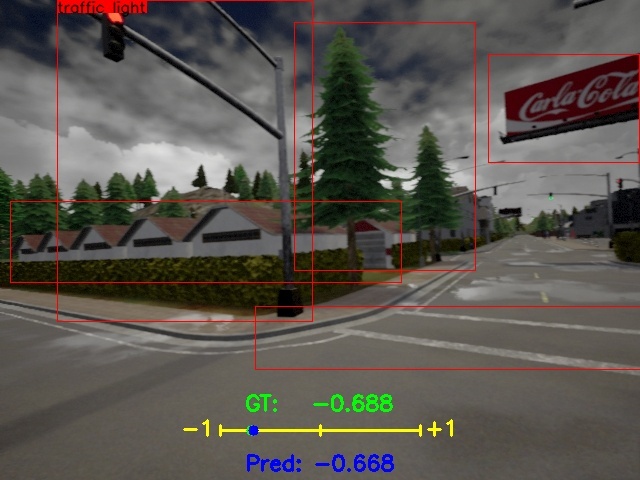}}&
\shortstack{\includegraphics[width=0.33\linewidth]{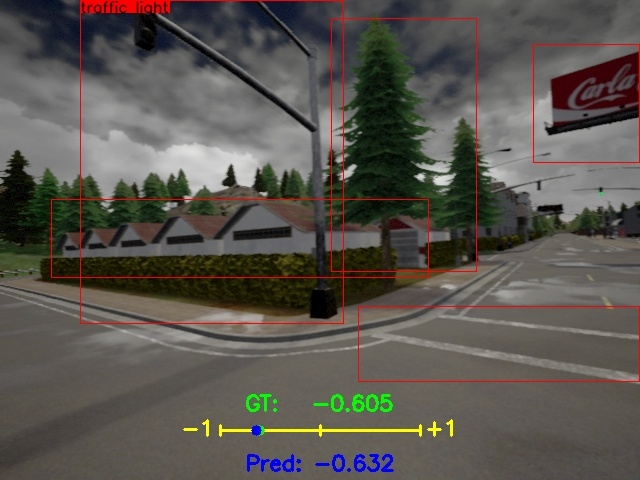}}\\[1pt]

\end{tabular}
}
\vspace{-1ex}
    \caption{Visualization results of our method on different environments. 
    \label{fig:Vis}
    \vspace{-3ex}
    } 
\end{figure}

\textbf{Decomposition Effectiveness.} 
We further compute the decomposition rate of our Lightweight Temporal Transformer Decomposition. For a full interaction between the multimodal inputs with the original vision transformer network, we would need to learn $5.9$ billion parameters, which is infeasible in practice in the federated learning setup. By using the proposed decomposition method with the provided settings, i.e., the number of slicing $\mathcal{R}=32$ and the dimension of the joint representation $d_z=1024$, the number of parameters that need to be learned is only around $5$ million. In other words, we achieve a decomposition rate of approximately $1179$ times.

\subsection{Robotic Demonstration}
We deploy the aggregated trained model on an autonomous mobile platform for real-world validation. The training process utilized the Udacity+ dataset using Gaia topology. The mobile robot is equipped with a 12-core ARM Cortex-A78AE 64-bit CPU and an NVIDIA Orin NX GPU, providing sufficient computational resources for edge-based inference. With an optimized inference time of \textit{18 ms}, our approach enables low-latency, real-time steering angle predictions, crucial for responsive autonomous navigation (Fig.~\ref{fig:VisReal}). Additional real-world demonstrations can be found in our supplementary video.

\begin{figure}[h] 
\vspace{1ex}
   \centering
\resizebox{\linewidth}{!}{
\setlength{\tabcolsep}{2pt}
\begin{tabular}{cc}
\shortstack{\includegraphics[width=0.5\linewidth]{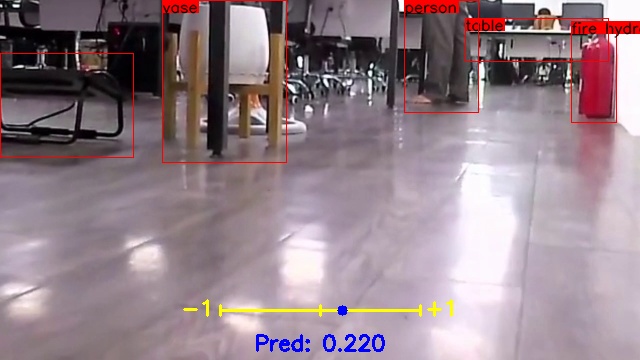}}&
\shortstack{\includegraphics[width=0.5\linewidth]{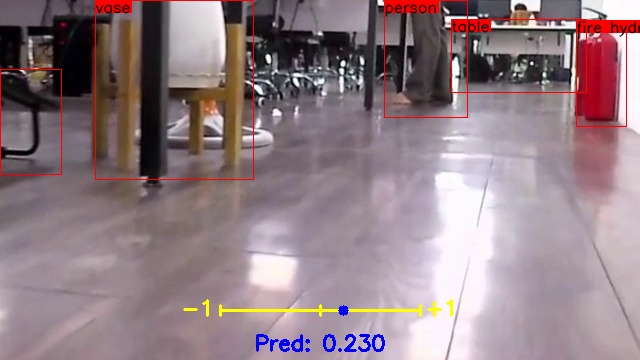}}
\\[0pt]
\shortstack{\includegraphics[width=0.5\linewidth]{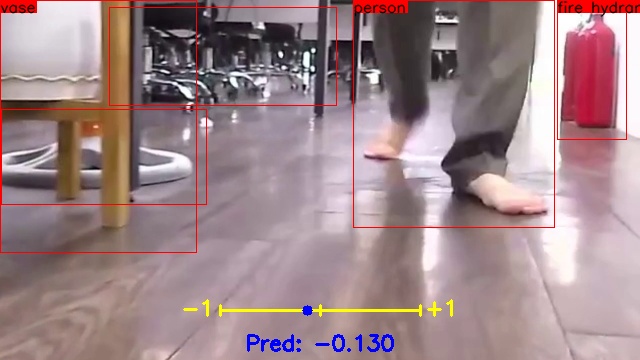}}&
\shortstack{\includegraphics[width=0.5\linewidth]{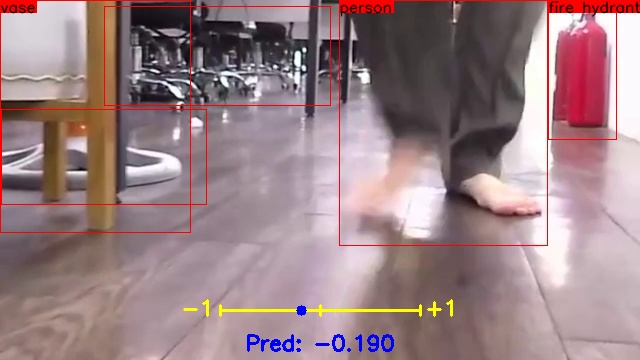}}\\[1pt]

\end{tabular}
}
    \caption{Visualization results in real robot experiments. Our proposed model is lightweight and can be integrated into robot edge devices. 
    \label{fig:VisReal}
    \vspace{-3ex}
    } 

\end{figure}
\subsection{Limitation and Discussion} 
While our proposed lightweight temporal transformer decomposition demonstrates significant improvements in reducing parameter complexity and enhancing computational efficiency, certain limitations remain. 
Specifically, because our method approximates a large transformer into a smaller one with fewer parameters, as the temporal input length increases (e.g., past frames increasing from 5 to 30), the model may be insufficient to learn useful information from the input, potentially leading to degraded performance.
Additionally, the choice of the slicing parameter $\mathcal{R}$ and the embedding dimension $d_z$ impacts the trade-off between accuracy and efficiency. Moreover, the reliance on the rank-1 tensor approximation can lead to a loss of expressiveness, preventing architecture-based solutions from effectively addressing non-independent and identically distributed issues. While our experiments show promising results, addressing these limitations in future work could involve exploring adaptive mechanisms to dynamically adjust the tensor decomposition parameters or integrating regularization techniques to mitigate approximation errors and enhance model stability.

\section{Conclusion}
We propose temporal transformer decomposition, a new method designed to efficiently learn image frames and temporal steering series in a federated autonomous driving context. By leveraging unitary attention decoupling and tensor factorization, our approach decomposes learnable attention maps into small-sized learnable matrices, maintaining an efficient model suitable for real-time predictions while preserving critical temporal information to enhance autonomous driving performance. Extensive evaluations conducted across three datasets demonstrate the effectiveness and feasibility of our approach, validating its potential for practical deployment in autonomous driving systems. In the future, we intend to validate our approach with a broader range of data sources and deploy trained models in more real-world scenarios using autonomous vehicles on public roads.

\bibliographystyle{class/IEEEtran}
\bibliography{class/reference}

\begin{thebibliography}{10}
\providecommand{\url}[1]{#1}
\csname url@rmstyle\endcsname
\providecommand{\newblock}{\relax}
\providecommand{\bibinfo}[2]{#2}
\providecommand\BIBentrySTDinterwordspacing{\spaceskip=0pt\relax}
\providecommand\BIBentryALTinterwordstretchfactor{4}
\providecommand\BIBentryALTinterwordspacing{\spaceskip=\fontdimen2\font plus
\BIBentryALTinterwordstretchfactor\fontdimen3\font minus \fontdimen4\font\relax}
\providecommand\BIBforeignlanguage[2]{{%
\expandafter\ifx\csname l@#1\endcsname\relax
\typeout{** WARNING: IEEEtran.bst: No hyphenation pattern has been}%
\typeout{** loaded for the language `#1'. Using the pattern for}%
\typeout{** the default language instead.}%
\else
\language=\csname l@#1\endcsname
\fi
#2}}

\bibitem{gidado2020survey}
U.~M. Gidado, H.~Chiroma, N.~Aljojo, \emph{et~al.}, ``A survey on deep learning for steering angle prediction in autonomous vehicles,'' \emph{IEEE Access}, 2020.

\bibitem{nguyen2021autonomous}
A.~Nguyen and Q.~D. Tran, ``Autonomous navigation with mobile robots using deep learning and the robot operating system,'' in \emph{IROS}, 2021.

\bibitem{nguyen2021federated}
D.~C. Nguyen, M.~Ding, Q.-V. Pham, \emph{et~al.}, ``Federated learning meets blockchain in edge computing: Opportunities and challenges,'' \emph{IoT-J}, 2021.

\bibitem{zhao2021end}
X.~Zhao, M.~Qi, Z.~Liu, S.~Fan, C.~Li, and M.~Dong, ``End-to-end autonomous driving decision model joined by attention mechanism and spatiotemporal features,'' \emph{IET Intelligent Transport Systems}, 2021.

\bibitem{abou2019multimodalHPO}
M.~Abou-Hussein, S.~H. M{\"u}ller, and J.~Boedecker, ``Multimodal spatio-temporal information in end-to-end networks for automotive steering prediction,'' in \emph{ICRA}.\hskip 1em plus 0.5em minus 0.4em\relax IEEE, 2019.

\bibitem{hu2022st}
S.~Hu, L.~Chen, P.~Wu, H.~Li, J.~Yan, and D.~Tao, ``St-p3: End-to-end vision-based autonomous driving via spatial-temporal feature learning,'' in \emph{ECCV}.\hskip 1em plus 0.5em minus 0.4em\relax Springer, 2022.

\bibitem{wang2022and}
J.~Wang, Y.~Li, Z.~Zhou, C.~Wang, Y.~Hou, L.~Zhang, X.~Xue, M.~Kamp, X.~L. Zhang, and S.~Chen, ``When, where and how does it fail? a spatial-temporal visual analytics approach for interpretable object detection in autonomous driving,'' \emph{TVCG}, 2022.

\bibitem{barbieri2021decentralized}
L.~Barbieri, S.~Savazzi, M.~Brambilla, \emph{et~al.}, ``Decentralized federated learning for extended sensing in 6g connected vehicles,'' \emph{Vehicular Communications}, 2021.

\bibitem{zeng2021multi}
T.~Zeng, J.~Guo, K.~J. Kim, K.~Parsons, P.~Orlik, S.~Di~Cairano, and W.~Saad, ``Multi-task federated learning for traffic prediction and its application to route planning,'' in \emph{IV}.\hskip 1em plus 0.5em minus 0.4em\relax IEEE, 2021.

\bibitem{meese2022bfrt}
C.~Meese, H.~Chen, S.~A. Asif, W.~Li, C.-C. Shen, and M.~Nejad, ``Bfrt: Blockchained federated learning for real-time traffic flow prediction,'' in \emph{CCGrid}.\hskip 1em plus 0.5em minus 0.4em\relax IEEE, 2022.

\bibitem{li2021privacy}
Y.~Li, X.~Tao, X.~Zhang, \emph{et~al.}, ``Privacy-preserved federated learning for autonomous driving,'' \emph{T-ITS}, 2021.

\bibitem{nguyen2022deep}
A.~Nguyen, T.~Do, M.~Tran, \emph{et~al.}, ``Deep federated learning for autonomous driving,'' in \emph{IV}, 2022.

\bibitem{liang2022federated}
X.~Liang, Y.~Liu, T.~Chen, \emph{et~al.}, ``Federated transfer reinforcement learning for autonomous driving,'' in \emph{FTL}, 2022.

\bibitem{hu2022investigating}
H.~Hu, Z.~Liu, \emph{et~al.}, ``Investigating the impact of multi-lidar placement on object detection for autonomous driving,'' in \emph{CVPR}, 2022.

\bibitem{chen2022pseudo}
Y.-N. Chen, H.~Dai, and Y.~Ding, ``Pseudo-stereo for monocular 3d object detection in autonomous driving,'' in \emph{CVPR}, 2022.

\bibitem{wang2022ltp}
J.~Wang, T.~Ye, Z.~Gu, \emph{et~al.}, ``Ltp: Lane-based trajectory prediction for autonomous driving,'' in \emph{CVPR}, 2022.

\bibitem{ijaz2021automatic}
N.~Ijaz and Y.~Wang, ``Automatic steering angle and direction prediction for autonomous driving using deep learning,'' in \emph{ISCSIC}, 2021.

\bibitem{xin2020slip}
M.~Xin \emph{et~al.}, ``Slip-based nonlinear recursive backstepping path following controller for autonomous ground vehicles,'' in \emph{ICRA}, 2020.

\bibitem{xiong2021reduced}
J.~Xiong, B.~Li, R.~Yu, \emph{et~al.}, ``Reduced dynamics and control for an autonomous bicycle,'' in \emph{ICRA}, 2021.

\bibitem{yi2022anti}
L.~Yi, V.~Le, \emph{et~al.}, ``Anti-collision static rotation local planner for four independent steering drive self-reconfigurable robot,'' in \emph{ICRA}, 2022.

\bibitem{yin2022trajectory}
J.~Yin \emph{et~al.}, ``Trajectory distribution control for model predictive path integral control using covariance steering,'' in \emph{ICRA}, 2022.

\bibitem{shao2023reasonnet}
H.~Shao, L.~Wang, R.~Chen, S.~L. Waslander, H.~Li, and Y.~Liu, ``Reasonnet: End-to-end driving with temporal and global reasoning,'' in \emph{CVPR}, 2023.

\bibitem{liu2024attention}
J.~Liu, J.~Yin, Z.~Jiang, Q.~Liang, and H.~Li, ``Attention-based distributional reinforcement learning for safe and efficient autonomous driving,'' \emph{IEEE Robotics and Automation Letters}, 2024.

\bibitem{kaur2024federated}
G.~Kaur, S.~K. Grewal, and A.~Jain, ``Federated learning based spatio-temporal framework for real-time traffic prediction,'' \emph{Wireless Personal Communications}, 2024.

\bibitem{do2024reducingCDL}
T.~Do, B.~X. Nguyen, Q.~D. Tran, H.~Nguyen, E.~Tjiputra, T.-C. Chiu, and A.~Nguyen, ``Reducing non-iid effects in federated autonomous driving with contrastive divergence loss,'' in \emph{ICRA}.\hskip 1em plus 0.5em minus 0.4em\relax IEEE, 2024.

\bibitem{zhao2018federated}
Y.~Zhao, M.~Li, L.~Lai, \emph{et~al.}, ``Federated learning with non-iid data,'' \emph{arXiv}, 2018.

\bibitem{zhang2022gof}
Z.~Zhang, H.~Wang, Z.~Fan, J.~Chen, X.~Song, and R.~Shibasaki, ``Gof-tte: Generative online federated learning framework for travel time estimation,'' \emph{IoT}, 2022.

\bibitem{zhou2022stfl}
X.~Zhou, R.~Ke, Z.~Cui, Q.~Liu, and W.~Qian, ``Stfl: Spatio-temporal federated learning for vehicle trajectory prediction,'' in \emph{DTPI}.\hskip 1em plus 0.5em minus 0.4em\relax IEEE, 2022.

\bibitem{shen2024spatial}
X.~Shen, J.~Chen, and R.~Yan, ``A spatial--temporal model for network-wide flight delay prediction based on federated learning,'' \emph{Applied Soft Computing}, 2024.

\bibitem{liu2024online}
Q.~Liu, S.~Sun, M.~Liu, Y.~Wang, and B.~Gao, ``Online spatio-temporal correlation-based federated learning for traffic flow forecasting,'' \emph{T-ITS}, 2024.

\bibitem{belal2022pepper}
Y.~Belal, A.~Bellet, S.~B. Mokhtar, and V.~Nitu, ``Pepper: Empowering user-centric recommender systems over gossip learning,'' \emph{Proceedings of the ACM on Interactive, Mobile, Wearable and Ubiquitous Technologies}, 2022.

\bibitem{yuan2022fedtse}
X.~Yuan, J.~Chen, N.~Zhang, C.~Zhu, Q.~Ye, and X.~S. Shen, ``Fedtse: Low-cost federated learning for privacy-preserved traffic state estimation in iov,'' in \emph{INFOCOM}.\hskip 1em plus 0.5em minus 0.4em\relax IEEE, 2022.

\bibitem{chen2020practicalPriRec}
C.~Chen, J.~Zhou, B.~Wu, W.~Fang, L.~Wang, Y.~Qi, and X.~Zheng, ``Practical privacy preserving poi recommendation,'' \emph{ACM Transactions on Intelligent Systems and Technology (TIST)}, 2020.

\bibitem{chellapandi2023federated}
V.~P. Chellapandi, L.~Yuan, C.~G. Brinton, S.~H. {\.Z}ak, and Z.~Wang, ``Federated learning for connected and automated vehicles: A survey of existing approaches and challenges,'' \emph{IV}, 2023.

\bibitem{kolda2009tensor}
T.~G. Kolda and B.~W. Bader, ``Tensor decompositions and applications,'' \emph{SIAM review}, 2009.

\bibitem{sidiropoulos2017tensor}
N.~D. Sidiropoulos, L.~De~Lathauwer, X.~Fu, K.~Huang, E.~E. Papalexakis, and C.~Faloutsos, ``Tensor decomposition for signal processing and machine learning,'' \emph{IEEE Transactions on signal processing}, 2017.

\bibitem{zhang2020robust}
M.~Zhang, Y.~Gao, C.~Sun, and M.~Blumenstein, ``Robust tensor decomposition for image representation based on generalized correntropy,'' \emph{TIP}, 2020.

\bibitem{yin2021towards}
M.~Yin, Y.~Sui, S.~Liao, and B.~Yuan, ``Towards efficient tensor decomposition-based dnn model compression with optimization framework,'' in \emph{CVPR}, 2021.

\bibitem{dai2023deep}
W.~Dai, J.~Fan, Y.~Miao, and K.~Hwang, ``Deep learning model compression with rank reduction in tensor decomposition,'' \emph{IEEE Transactions on Neural Networks and Learning Systems}, 2023.

\bibitem{tjandra2020recurrent}
A.~Tjandra, S.~Sakti, and S.~Nakamura, ``Recurrent neural network compression based on low-rank tensor representation,'' \emph{IEICE TRANSACTIONS on Information and Systems}, 2020.

\bibitem{wang2021kronecker}
D.~Wang, B.~Wu, G.~Zhao, M.~Yao, H.~Chen, L.~Deng, T.~Yan, and G.~Li, ``Kronecker cp decomposition with fast multiplication for compressing rnns,'' \emph{IEEE Transactions on Neural Networks and Learning Systems}, 2021.

\bibitem{hinton2002training}
G.~E. Hinton, ``Training products of experts by minimizing contrastive divergence,'' \emph{Neural computation}, 2002.

\bibitem{sautier2022image}
C.~Sautier, G.~Puy, S.~Gidaris, A.~Boulch, A.~Bursuc, and R.~Marlet, ``Image-to-lidar self-supervised distillation for autonomous driving data,'' in \emph{CVPR}, 2022.

\bibitem{li2022driver}
G.~Li, Z.~Ji, S.~Li, X.~Luo, and X.~Qu, ``Driver behavioral cloning for route following in autonomous vehicles using task knowledge distillation,'' \emph{IV}, 2022.

\bibitem{im2023visual}
J.~Im~Choi and Q.~Tian, ``Visual-saliency-guided channel pruning for deep visual detectors in autonomous driving,'' in \emph{IV}.\hskip 1em plus 0.5em minus 0.4em\relax IEEE, 2023.

\bibitem{yang2023deep}
W.~Yang, H.~Yu, B.~Cui, R.~Sui, and T.~Gu, ``Deep neural network pruning method based on sensitive layers and reinforcement learning,'' \emph{Artificial Intelligence Review}, 2023.

\bibitem{samal2020attention}
K.~Samal, M.~Wolf, and S.~Mukhopadhyay, ``Attention-based activation pruning to reduce data movement in real-time ai: A case-study on local motion planning in autonomous vehicles,'' \emph{IEEE Journal on Emerging and Selected Topics in Circuits and Systems}, 2020.

\bibitem{gheorghe2021model}
S.~Gheorghe and M.~Ivanovici, ``Model-based weight quantization for convolutional neural network compression,'' in \emph{EMES}.\hskip 1em plus 0.5em minus 0.4em\relax IEEE, 2021.

\bibitem{sciangula2022hardware}
G.~Sciangula, F.~Restuccia, A.~Biondi, and G.~Buttazzo, ``Hardware acceleration of deep neural networks for autonomous driving on fpga-based soc,'' in \emph{DSD}.\hskip 1em plus 0.5em minus 0.4em\relax IEEE, 2022.

\bibitem{wang2018cooperative}
J.~Wang and G.~Joshi, ``Cooperative sgd: A unified framework for the design and analysis of communication-efficient sgd algorithms,'' in \emph{ICLRW}, 2018.

\bibitem{li2019convergence}
X.~Li, K.~Huang, W.~Yang, S.~Wang, and Z.~Zhang, ``On the convergence of fedavg on non-iid data,'' in \emph{ICLR}, 2019.

\bibitem{Yang2016StackedAN}
Z.~Yang, X.~He, J.~Gao, L.~Deng, and A.~J. Smola, ``Stacked attention networks for image question answering,'' in \emph{CVPR}, 2016.

\bibitem{bro2009modelingPARALIND}
R.~Bro, R.~A. Harshman, N.~D. Sidiropoulos, and M.~E. Lundy, ``Modeling multi-way data with linearly dependent loadings,'' \emph{Journal of Chemometrics: A Journal of the Chemometrics Society}, 2009.

\bibitem{Hitchcock1927TheEOTucker}
F.~L. Hitchcock, ``The expression of a tensor or a polyadic as a sum of products,'' \emph{Journal of Mathematics and Physics}, vol.~6, pp. 164--189, 1927.

\bibitem{Kim2018BilinearAN}
J.-H. Kim, J.~Jun, and B.-T. Zhang, ``Bilinear attention networks,'' in \emph{NIPS}, 2018.

\bibitem{sandler2018mobilenetv2}
M.~Sandler, A.~Howard, M.~Zhu, \emph{et~al.}, ``Mobilenetv2: Inverted residuals and linear bottlenecks,'' in \emph{CVPR}, 2018.

\bibitem{loquercio2018dronet}
A.~Loquercio, A.~I. Maqueda, C.~R. del Blanco, \emph{et~al.}, ``Dronet: Learning to fly by driving,'' \emph{RA-L}, 2018.

\bibitem{mcmahan2017communication}
B.~McMahan, E.~Moore, D.~Ramage, \emph{et~al.}, ``Communication-efficient learning of deep networks from decentralized data,'' in \emph{AISTATS}, 2017.

\bibitem{li2018federated}
T.~Li, A.~K. Sahu, M.~Zaheer, \emph{et~al.}, ``Federated optimization in heterogeneous networks,'' \emph{arXiv}, 2018.

\bibitem{sattler2019robust}
F.~Sattler, S.~Wiedemann, \emph{et~al.}, ``Robust and communication-efficient federated learning from non-iid data,'' \emph{TNNLS}, 2019.

\bibitem{yu2018spatio}
B.~Yu, H.~Yin, and Z.~Zhu, ``Spatio-temporal graph convolutional networks: A deep learning framework for traffic forecasting,'' in \emph{Proceedings of the Twenty-Seventh International Joint Conference on Artificial Intelligence}.\hskip 1em plus 0.5em minus 0.4em\relax International Joint Conferences on Artificial Intelligence Organization, 2018.

\bibitem{wang2019matcha}
J.~Wang, A.~K. Sahu, Z.~Yang, \emph{et~al.}, ``Matcha: Speeding up decentralized sgd via matching decomposition sampling,'' in \emph{ICC}, 2019.

\bibitem{prim1957shortest}
R.~C. Prim, ``Shortest connection networks and some generalizations,'' \emph{The Bell System Technical Journal}, 1957.

\bibitem{marfoq2020throughput}
O.~Marfoq, C.~Xu, G.~Neglia, \emph{et~al.}, ``Throughput-optimal topology design for cross-silo federated learning,'' in \emph{NIPS}, 2020.

\bibitem{udacity2016data}
Udacity, ``An open source self-driving car,'' 2016.

\bibitem{hinton2012rmsprop}
T.~Tieleman and G.~Hinton, ``Lecture 6.5-rmsprop, coursera: Neural networks for machine learning,'' \emph{University of Toronto, Technical Report}, vol.~6, 2012.

\bibitem{knight2011internetzoo}
S.~Knight, H.~X. Nguyen, N.~Falkner, \emph{et~al.}, ``The internet topology zoo,'' \emph{J-SAC}, 2011.

\bibitem{awscloud}
F.~P. Miller, A.~F. Vandome, and J.~McBrewster, ``Amazon web services,'' \emph{Retrieved November}, 2011.

\end{thebibliography}

\end{document}